\def\eqref#1{equation~\ref{#1}}
\def\1{\bm{1}}
\def\vk{{\bm{k}}}
\def\vp{{\bm{p}}}
\def\vq{{\bm{q}}}
\def\vv{{\bm{v}}}
\def\vw{{\bm{w}}}
\def\vx{{\bm{x}}}
\def\vz{{\bm{z}}}
\def\mK{{\bm{K}}}
\def\mQ{{\bm{Q}}}
\def\mR{{\bm{R}}}
\def\mT{{\bm{T}}}
\def\mV{{\bm{V}}}
\def\mW{{\bm{W}}}
\def\mX{{\bm{X}}}
\DeclareMathAlphabet{\mathsfit}{\encodingdefault}{\sfdefault}{m}{sl}
\SetMathAlphabet{\mathsfit}{bold}{\encodingdefault}{\sfdefault}{bx}{n}
\def\sB{{\mathbb{B}}}
\newcommand{\R}{\mathbb{R}}
\newtheorem{definition}{Definition}[section]
\newtheorem{theorem}{Theorem}[section]
\newtheorem{lemma}[theorem]{Lemma}
\newtheorem{problem}{Problem}
\title{Accelerating Material Property Prediction using Generically Complete Isometry Invariants}
\author{
 Jonathan Balasingham \\
  Department of Computer Science\\
University of Liverpool\\
Liverpool L69 3BX, UK\\
  \texttt{jbalasin@liverpool.ac.uk} \\
   \And
 Viktor Zamaraev \\
  Department of Computer Science\\
University of Liverpool\\
Liverpool L69 3BX, UK\\
  \texttt{Viktor.Zamaraev@liverpool.ac.uk} \\
  \And
 Vitaliy Kurlin \\
  Department of Computer Science\\
University of Liverpool\\
Liverpool L69 3BX, UK\\
  \texttt{Vitaliy.Kurlin@liverpool.ac.uk} 
}
\begin{document}
\maketitle
\begin{abstract}
Periodic material or crystal property prediction using machine learning has grown popular in recent years as it provides a computationally efficient replacement for classical simulation methods. A crucial first step for any of these algorithms is the representation used for a periodic crystal. While similar objects like molecules and proteins have a finite number of atoms and their representation can be built based upon a finite point cloud interpretation, periodic crystals are unbounded in size, making their representation more challenging. In the present work, we adapt the Pointwise Distance Distribution (PDD), a continuous and generically complete isometry invariant for periodic point sets, as a representation for our learning algorithm. The PDD distinguished all (more than 660 thousand) periodic crystals in the Cambridge Structural Database as purely periodic sets of points without atomic types. We develop a transformer model with a modified self-attention mechanism that combines PDD with compositional information via a spatial encoding method. This model is tested on the crystals of the Materials Project and Jarvis-DFT databases and shown to produce accuracy on par with state-of-the-art methods while being several times faster in both training and prediction time.
\end{abstract}


\section{Introduction}

A solid crystalline material is made up of a periodically repeated unit cell containing a motif of atoms (ions or molecules). Crystals can distinguish themselves by atomic types (chemical elements and possibly charges of ions) and by the geometry of atomic centers. Both of these aspects can determine the various properties of a crystal. Knowledge of these properties is pertinent for determining whether a crystal can be experimentally synthesized or is useful for a particular application.

Determination of property values can be done using ab initio calculations with techniques like density functional theory (DFT) \cite{dft1}. These techniques are often computationally expensive \cite{cohen2012challenges}. Further, they require extensive domain knowledge to be applied correctly, making them inaccessible. Recently, machine learning has become very popular as a substitute and has experienced success in decreasing computational costs while producing accurate predictions. 

\begin{figure}[h!]
    \centering
    \includegraphics[width=0.85\linewidth]{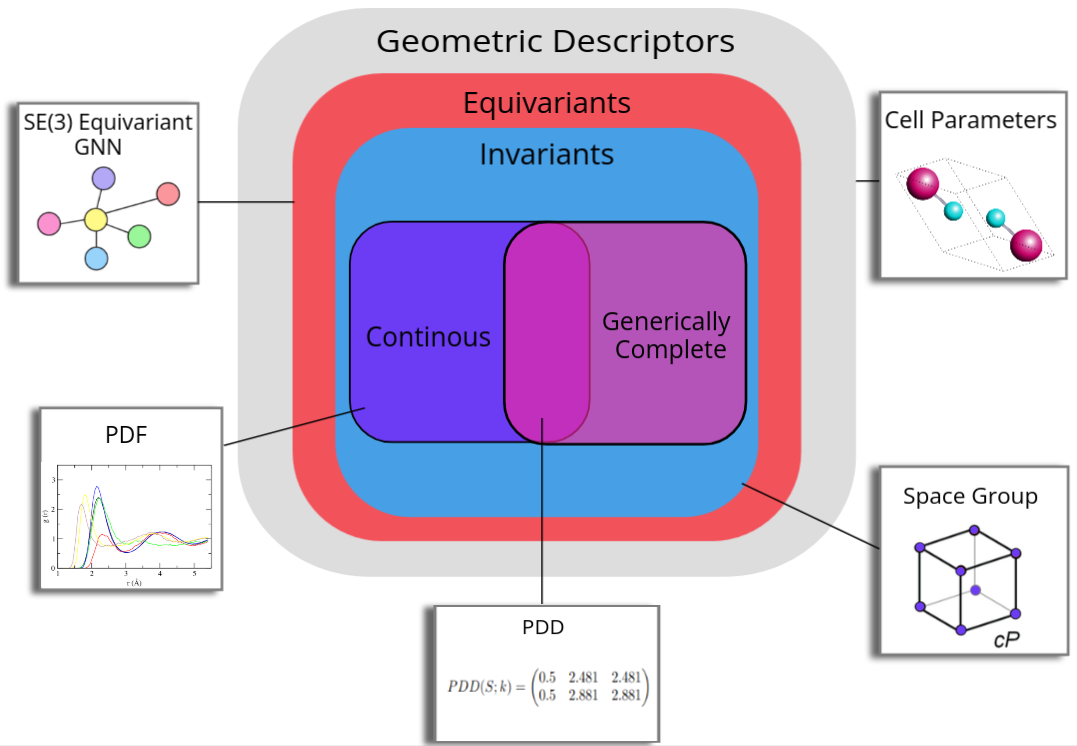}
    \caption{\small Classification of geometric descriptors for periodic crystals based on the properties possessed. Cell parameters consist of the unit cell lengths and angles, but this is ambiguous as there are an infinite number of unit cells. Space group is a label defined by the symmetry relations the crystal exhibits, but is sensitive to atomic perturbations. Equivariant GNNs cannot be used to distinguish periodic structures. PDF needs additional smoothing to retain continuity, introducing more parameters. The PDD is invariant, generically complete, and continuous under the EMD. }
    \label{fig:invariant_classification}
\end{figure}

Any learning algorithm requires an input representation that adequately describes the object of interest. Objects similar to crystals, like molecules, are often treated as finite point clouds. This makes their representation more easily constructible than a representation for crystals, which are not bounded in size.

While a crystal can be described in several ways, descriptors that are easily human-interpretable, such as unit cell parameters or atomic coordinates are not useful for machine learning algorithms. Atomic coordinates do not retain invariance under rigid motion. Unit cell based descriptors are also ambiguous as there are infinitely many valid unit cells for a single structure. Such ambiguities can result in different model outputs for the same structure. Techniques such as data augmentation \cite{quiroga2020revisiting_data_augmentation} and parameter sharing \cite{pmlr-v70-ravanbakhsh17a_parameter_sharing} can mitigate these effects but still do not guarantee the aforementioned consistency.

The \textit{structure-property relationship} \cite{structure_property} dictates that changes in the structure of a material result in changes in its properties. Distinction between crystals then allows for distinction between their respective property values. A machine learning algorithm (for a regression task) is a map from a crystal representation to value. If a representation cannot \textit{distinguish} periodic crystals then two different crystals can incorrectly be perceived to be the same and so will the output property values. Similarly, if the same crystal can be represented in different ways, consistent mapping cannot be guaranteed.

The fundamental model of a crystal is a periodic set of points at all atomic centers (even without atomic types), see details in Definition \ref{def:pps}.
Since crystal structures are determined in a rigid form, their strongest practical equivalence is \emph{rigid motion}, which is a composition of translations and rotations in $\mathbb{R}^n$.
We consider a slightly weaker \emph{isometry}, which is a composition of rigid motion and mirror reflections.
Two periodic point sets $S$ and $Q$ are \textit{isometric} if they are related by an isometry $f:\mathbb{R}^n\to\mathbb{R}^n$, so $f(S)=Q$. 
This work exclusively applies these concepts to dimension $n=3$. 

An isometry \emph{invariant} I is a property (descriptor or function) that is preserved under any isometry. The values of $I$ should be simpler than the initial periodic set, for example, a scalar, vector, or matrix. Not all invariants can be considered equally useful, however. Space groups, for example, reflect the symmetry of a given material but tiny perturbations of atomic coordinates can change the material's space group. Hence the important question is to quantify  \textit{how} different crystals are. Figure \ref{fig:invariant_classification} illustrates the relationship between geometric descriptors based on their properties. A practically useful invariant should satisfy the following conditions introduced by \cite{widdowson2022resolving}\label{sec:problem11}

\begin{problem}
    Find a function $I$ on all periodic point sets in $\mathbb{R}^n$ subject to the following conditions:
    
\begin{enumerate}
\setlength\itemsep{0.5em}
    \item[1a.] \textit{Invariance}: If two periodic point sets, $S$ and $Q$ are isometric, then $I(S) = I(Q)$. 
    \item[1b.] \textit{Generic Completeness}: If $I(S) = I(Q)$, then the two periodic sets are isometric.
    \item[1c.] \textit{Computability}: the invariant $I$, the metric $d$, and the reconstruction of $S$ can be computed in polynomial time with respect to the size of the motif of any periodic point set $S$.
    \item[1d.] \textit{Reconstructability}: any periodic set $S$ can be fully reconstructed from its invariant $I(S)$.
    \item[1e.] \textit{Metric}: There exists a distance function $d$ on the codomain $I$ that satisfies the following: 1) $d(I(S), I(Q)) = 0$ iff $I(S) = I(Q)$. 2) $d(I(S), I(Q)) = d(I(Q), I(S))$. 3) For any three periodic sets $S, Q,$ and $T$, $d(I(S), I(Q)) + d(I(Q), I(T)) \geq d(I(S), I(T))$
    \item[1f.] \textit{Lipschitz continuity}: If a periodic set $Q$ is obtained by shifting points within periodic set $S$ by at most $\epsilon$, then the distance between the two periodic sets can be bound according to some distance function $d$ such that $d(I(S), I(Q)) \leq C\epsilon$ for some fixed constant $C$.
\end{enumerate}
\end{problem}

In the present work, an isometry invariant called the Pointwise Distance Distribution (PDD), defined formally in Definition \ref{def:pdd}, which has properties $1a$ and $1c$ (see Theorem 5.1 of \cite{widdowson2022resolving}), and satisfies $1b$ and $1d$ for any periodic set in the general position \cite{widdowson2022resolving} along with sufficiently large $k$, and inclusion of the lattice (see theorem 4.4 of \cite{widdowson2022resolving}). Conditions $1e$ and $1f$ (see Theorem 4.3 of \cite{widdowson2022resolving}) are satisfied under the Earth Mover's Distance (EMD) between PDDs. 

The contribution of this work is a Transformer model \cite{attentionisallyouneed} which utilizes the PDD to make predictions on the properties of materials in a highly efficient manner compared to state-of-the-art models. In doing this, the gap between unambiguous crystal descriptors and machine learning models is bridged. Use of such a representation produces results on par or better than graph-based models, despite the additional structuring of data that comes with edges and edge embeddings. This model is faster in both prediction and training speed compared to two state-of-the-art models. To prove the method's robustness, the model is applied to the crystals of the Materials Project \cite{MP1} and Jarvis-DFT \cite{choudhary2020joint_jarvis_og}. Further experimentation on material classification, hyper-parameter sensitivity testing, and prediction of crystal properties without compositional information is included in Supplemental Materials Sections \ref{sec:addition_experiments} and \ref{sec:le_results}.

\subsection{Related Work}

Early works in crystal property prediction used more classical statistical methods like kernel regression \cite{LR1} before eventually moving towards deep learning \cite{deepANN}. More recent works have shifted to Graph Neural Networks (GNN)  \cite{CGCNN,ALIGNN,matformer,deeperGNN,iCGCNN,megnet,crysXPP,geoCGCNN,mtcgcnn,schutt2018schnet} due to their ability to make use of structured data. Several of these focus on predicting the properties of the crystals contained within the Materials Project \cite{MP1} using a multigraph representation where vertices represent atoms and edges are embedded with the pairwise distances to an atom's nearest neighbors. Some state-of-the-art models use line graphs to incorporate more geometric information like angles and dihedrals \cite{ALIGNN, ruff2023connectivity}. The derived line graphs can contain significantly more vertices and edges, incurring a higher computational cost. \cite{pmlr-v202-lin23m_potnet} take a physics principled approach and substitute the interatomic distances for interatomic potentials and capture a crystal's periodicity using the infinite sum of these potentials.

While effective in modeling crystal structures, graphs are discontinuous under perturbations \cite{perturbations}. Small movements in the atomic positioning can cause significant changes to the graph's topology. Some graphs are not unit cell invariant. Due to an infinite number of possible unit cells, the graph is then reliant on the data or the cell reduction technique used.


SE(3)-equivariant models such as Tensor Flow Networks \cite{equivariant}, SE(3)-Transformers \cite{equivariant_transformers}, and SE(3)-GNNs \cite{se3_equiv_gnn} impose constraints on the set of learnable functions of the network such that the output is equivariant with respect to the input points. While effective on finite point clouds, they offer no promise of completeness with periodic point sets which is necessary for distinguishing between structures. These architectures can also be beneficial when predicting properties that are equivariant with respect to rigid motion, but the crystal properties examined here are invariant to such symmetries, and thus invariance of the model output through either the input or model architecture is required.

In addition to the properties mentioned earlier, the invariant needs to be able to be adapted for a learning algorithm. Further, it needs a way to incorporate compositional information as invariants typically only consider structure. Some invariants have been adapted for use in machine learning algorithms such as symmetry functions \cite{symm1,symm2} and Voronoi cells \cite{voronoi}. Both of these, however, still lack continuity. The Partial Radial Distribution Function is invariant and continuous but is not complete for homometric crystals. Smooth Overlapped Atomic Positions \cite{soap} has been incorporated into models for property prediction, but is an invariant for atomic environments, not entire structures. Coulomb matrices use electrostatic interactions between atoms instead of Euclidean distances and are invariant and complete for molecules but have not been proven to retain these properties for the periodic case \cite{faber2015crystal}. Average Minimum Distance (AMD) \cite{widdowson2022average} is invariant and continuous and has been used to predict lattice energies via Gaussian Process Regression \cite{amd_le}, but is incomplete and does not currently have a way to incorporate compositional information. 

\section{Methods}

A periodic crystal can be represented as a periodic point set \cite{PPS} with points located at the atomic centers of the structure. They do not differentiate between the types of atoms and instead treat every point as unlabeled. A periodic point set (periodic set) can defined like so:

\begin{definition}[Periodic Point Set]\label{def:pps}
For a set of $n$ basis vectors $\vv_1 \ldots \vv_n \in \R^n$, the lattice $L$ is formed by the integer linear combinations of these basis vectors $\{ \sum_{i=1}^n c_i \vv_i \vert c_i \in \mathbb{Z}   \}$. The
unit cell is the parallelepiped $U = \{ \sum_{i=1}^n t_i \vv_i | t_i \in [0, 1) \}$. For a unit cell $U$, the motif $M$ is a finite subset of $U$. Then, a periodic point set $S$ of lattice $L$ and motif $M$ is defined by $\{ \bm{\lambda} + \vp : \bm{\lambda} \in L ,  \vp \in M \}$.
\end{definition}

The PDD of a periodic set is the $m \times (k+1)$ matrix where $m$ is the number of atoms in the motif $M$ and $k$ is a positive integer indicating the number of nearest neighbors to use. Each row corresponds to a point in the motif and the entries within the row consist of the Euclidean distance to each of this point's $k$-nearest neighbors within the entire periodic set $S$. The first entry of the row is assigned to be a weight equal to $\frac{1}{m}$ (the distances follow). Once the matrix is formed, rows that are the same are collapsed into a single row and their respective weights are added. Due to very small differences between rows caused by floating point arithmetic or atomic perturbations, it is common to use a tolerance, henceforth called the \textit{collapse tolerance}, that allows rows with small non-zero differences (e.g. with respect to $L_\infty$ distance) to be treated as the same. By collapsing rows in the PDD, the resulting matrix representation is always the same for a given crystal, regardless of the unit cell. Formally,

\begin{definition}[Pointwise Distance Distribution]\label{def:pdd}
    For a periodic set $S = L + M$ with a set of motif points $M = \{\vp_1, \ldots, \vp_m \}$ within a unit cell $U$ of lattice $L$, the uncollapsed PDD matrix for a parameter $k \in \mathbb{N}^+$ is a $m \times (k+1)$ matrix where the $i^{th}$ row consists of the row weight $w_i = \frac{1}{m}$ followed by the euclidean distances $d_1 \ldots d_k$ from the point $\vp_i$ to its $k$-nearest neighbors such that $d_1 \leq d_2 \ldots \leq d_k$. If a group of rows is found to be identical (or close enough using a valid distance measure within some tolerance) then the matrix rows are collapsed and the weights of the involved rows are summed. The resulting matrix will then have less than $m$ rows. 
\end{definition}

This matrix is referred to as $\textsc{PDD}(S;k)$ for a periodic set $S$ and positive integer $k$.

\subsection{Periodic Set Transformer}\label{sec:PST}

In our model, rather than being considered a matrix of values, the PDD will be considered a set of grouped atoms. A single group of atoms corresponds to the $k$-nearest neighbor distances in a given row within the PDD matrix. Each member of the set will carry the weight provided by the row in the PDD. Any set $A$ can trivially be turned into a weighted set by weighing each element by $\frac{1}{|A|}$. When the PDD is not collapsed, then there can be more than a single occurrence of any given element, making the uncollapsed PDD a multiset. Now, let $A$ be a multiset of the form $A = \{ a_{i}^{(j)} : i \in [1, \dots, n], j\in [1,\dots, n_i] \}$ where $n_i$ is the multiplicity of element $a_i$ and $a_{i}^{(j)}$ is the $j^{th}$ occurrence of element $a_i$. This multiset can be turned into a weighted set by assigning each element $a_i$ with the weight $\frac{n_i}{n}$. We can recover the influence of multiplicity by the use of weights in our model. 

\begin{figure}[h!]
    \centering
    \includegraphics[width=0.9\linewidth]{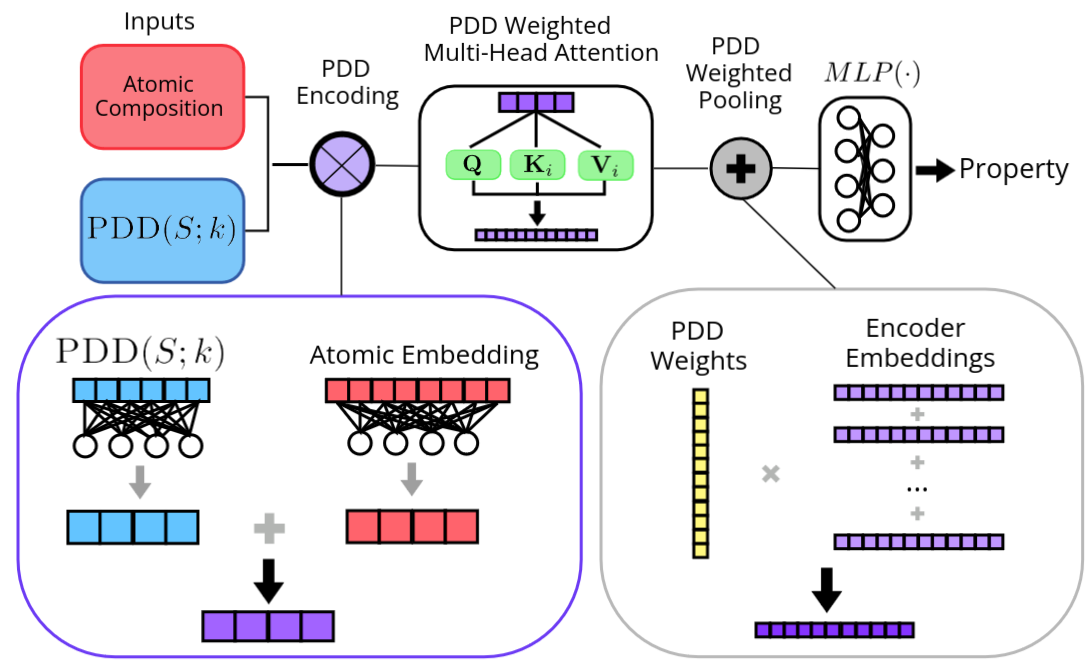}
    \caption{\small Overview of the architecture of the Periodic Set Transformer. PDD encoding is used to combine the structural information in the PDD with atomic
types. The weights of the PDD are incorporated in the attention mechanism and during the pooling of the embeddings to define the multiplicity of the input set.}
    \label{fig:pst}
\end{figure}

When a periodic crystal has its unit cell modified, the proportion of each atom is expanded or reduced. The use of weights captures this behavior in the form of a concentration or frequency. 

We use an attention mechanism to find the interactions between members of the set. The rows of the PDD contain the pairwise distance information, but they do not indicate which atoms these distances correspond to. Application of the attention mechanism can help the model learn these interactions. 

Let $\mR \in \R^{m \times k}$ be the PDD matrix containing $m$ rows without the associated weight column. Let $\vw \in \R^{m \times 1}$ be the column vector containing the weights from the PDD matrix. The initial embedding is $\mX^{(0)} = \mR \mW_d$ where $\mW_d \in \R^{k \times d}$ is the initial trainable weight matrix. The embedding is updated according to:
\begin{equation}
    \mX^{(1)} = \mX^{(0)} + SLP \Bigg( \sigma \left( \frac{\mQ \mK^T}{\sqrt{d}} \right) \mV \Bigg) 
\end{equation}
where $\mQ = \mX^{(0)}  \mW_Q$, $\mK = \mX_i^{(0)} \mW_K$ and $\mV = \mX^{(0)} \mW_{V}$ and $d$ is the embedding dimension of the weight matrices for the query, key, and value $\mW_Q, \mW_K,$ and $\mW_V$ respectively as described in \cite{attentionisallyouneed}. The function $\sigma$ is the softmax function with the PDD weights integrated into it; $\sigma$ is applied to each row $\vz$ of the input matrix, and $i$ and $j$ are used to index entries in $\vz$ and $\vw$. The $i^{th}$ entry of the output vector is defined by:

\begin{equation}\label{eq:attention_mechanism}
\sigma(\vz)_i = \frac{w_i e^{z_i}}{\sum_{j=1}^{m} w_j e^{z_j}}     
\end{equation}
The result is passed through a single-layer perceptron $SLP$. The layer normalization order described by \cite{prelayernorm} is used for increased stability during training. Equation (\ref{eq:attention_mechanism}) describes the case for a single attention head. When multiple attention heads are used, the PDD weights are applied to each individually and the result is concatenated before being passed to the SLP like so:

\begin{equation}
   \mX^{(1)} = \mX^{(0)} + SLP \Bigg( \bigoplus_{i=1}^h  \sigma \left( \frac{\mQ_i \mK_i^T}{\sqrt{d}} \right) \mV_i \Bigg) 
\end{equation}

where $h$ is the number of attention heads, $\oplus$ is the concatenation operator and $\mQ_i, \mK_i$ and $\mV_i$ are the query, key and value for the $i^{th}$ head.
This process is repeated $l$ times; this determines the depth of the model. The embeddings are finally pooled into a single vector by reincorporating the PDD weights into a weighted sum of the row vectors $\vx_i$ of the final embedding $\mX^{(l)}$. 

\begin{equation}\label{eq:pooling}
\vx = \sum_i w_i \vx_i    
\end{equation}

\noindent
This final embedding can be passed to a perceptron layer to predict the property value. 


This version of self-attention can be applied to a weighted set or distribution. The weights are applied in such a way that the output of the PST is invariant to an arbitrary splitting of rows within the PDD. We provide a formal proof of this in supplementary material Section \ref{sec:proof}. An overview of the Periodic Set Transformer (PST) architecture with PDD encoding (described in the next section) can be seen in Figure \ref{fig:pst}.

\subsection{PDD Encoding}

While structure is a powerful indicator of a crystal's properties, there may be datasets in which it is not the primary differentiator of a set of crystals. In such cases, the composition of the atoms contained within the material has a heavy influence. The previously described transformer does a good job of utilizing the structural information within the PDD but does not provide an obvious way to include atomic composition. 

Transformers for natural language processing tasks use positional encoding to allow the model to distinguish the position of words within a given sentence \cite{positional_encoding_nlp}. A recent transformer model, \textit{Uni-Mol} \cite{unimol}, which performed property prediction for molecules (among other tasks), used \textit{3D spatial encoding} first proposed by \cite{graphormer} to give the model an understanding of each atom's position in space, relative to one another. This encoding is done at the pair level, using the Euclidean distance between atoms and a pair-type aware Gaussian kernel \cite{3d_spatial}. A transformer model for finite $3D$ points clouds is provided by \cite{point_transformer} via \textit{vector} attention. The case for crystals is more difficult because they are not bounded in size and can exhibit many symmetries. Fortunately, by using the rows of the PDD we can distinguish each atom with structural information. We refer to this as \emph{PDD encoding}. 

When rows are grouped together, they are done so by having the same $k$-nearest neighbor distances. Though rare, it is possible for rows corresponding to different atom types to be collapsed. If this occurs, the selection of either atom type will result in information loss. To prevent this, we add the condition that the groups must be formed on the basis of having the same $k$-nearest neighbor distances and the same atomic species. In this case, the periodic point set has points that are labeled according to atomic type.  

For a periodic set $S$, let $PDD(S;k)$ be the resulting PDD matrix with parameter $k$. Let $\mR$ be $PDD(S;k)$ without the initial weight column and $\mT$ be the matrix whose rows correspond to the vector of atomic properties used to describe the type of atom associated with each row of $PDD(S;k)$. The initial set of embeddings for the attention mechanism is defined as $\mX^{(0)} = \mR \mW_s + \mT \mW_c$ where $\mW_s$ and $\mW_c$ are initial embedding weights. By starting with a linear embedding, the PDD row can be transformed to match the dimension of composition embedding. The parameter $k$ used can then be changed as needed to include distance information from further neighbors. 

\section{Results and Discussion}
\subsection{Prediction of Materials Project Properties}\label{sec:mp_results}

The model will be applied to the data within the Materials Project. To make fair comparisons to other models we report the performance according to \textit{Matbench} \cite{matbench}, which contains data for various crystal properties. The error rates are reported using five-fold cross-validation with standardized training and testing sets for each fold. Further, tuning is done according to the models' authors and thus our model can be compared to others more fairly.

The crystals in the Materials Project are highly diverse in composition. For all predictions, we include the composition of the crystal with PDD encoding. To incorporate this compositional information the \textit{mat2vec} atomic embeddings supplied by \cite{tshitoyan2019unsupervised_mat2vec} are used. The embeddings have empirically been found to produce better performance than the one-hot encoded method used by CGCNN \cite{crabnet}. They also have the added convenience of not missing any atomic property information for certain elements.

In Table \ref{tab:mpresults1} we report the average mean-absolute-error (MAE) across the five test sets. We include the reported accuracies of other models to allow for comparison. The selection of models aims to present a high diversity in approaches while also coming from relatively recent publications. 
\textit{CrabNet} \cite{crabnet} is the only other Transformer model listed on Matbench. This model, in terms of architecture, is the most similar to the PST. Additionally, the atomic embeddings used to describe each chemical element are the same as those used in our model. The majority of models used in crystal property prediction use GNNs. Matbench features several of these, but the model that provides the best results on several properties is \textit{coGN}. \textit{coGN} is a GNN that includes angular and dihedral information through the use of line graphs. As such, the amount of information used is significantly more than the PST, which uses the distribution of pairwise distances. \textit{Crystal Twins} (CT) \cite{crystaltwins} is a model based on the convolutional layer developed by CGCNN \cite{CGCNN} (also used by several other models \cite{iCGCNN,moformer,crysXPP}) that uses self-supervised learning to create embeddings based on maximizing the similarity between augmented instances of a crystal.

\begin{table}[h]
\small
\caption{\label{tab:mpresults1}%
\small Five-fold cross-validation prediction MAE and standard deviation of MAE for properties of the crystals in the Materials Project. Bold values indicate the best (lowest) error rate while underlined values indicate the second-best error rate. PST performance is reported using PDD Encoding with a tolerance of $10^{-4}$ and $k=15$.}
\centering
\begin{tabular}{|p{2.4cm}|p{1.5cm}|c|c|c|c|c|}
\hline 
Property&
Units&
\hfil PST&
CrabNet&
\hfil coGN &
\hfil CrystalTwins\\
\hline
Formation Energy & $eV/atom$ & \hfil \underline{0.032 $\pm$ 0.0003} & \hfil 0.086 $\pm$ 0.001 & \hfil \textbf{0.021 $\pm$ 0.0003}  & \hfil 0.037 $\pm$ 0.001\\
Band Gap Energy & $eV$ & \hfil \underline{ 0.210 $\pm$ 0.002} & \hfil 0.266 $\pm$ 0.003 & \hfil \textbf{0.156} $\pm$ \textbf{0.002} & \hfil 0.264 $\pm$ 0.011 \\
Shear Modulus & $log_{10}(GPa)$ & \hfil \underline{0.074 $\pm$ 0.001} & \hfil 0.101 $\pm$ 0.002 & \hfil \textbf{0.069 $\pm$ 0.001} & \hfil 0.086 $\pm$ 0.004 \\
Bulk Modulus & $log_{10}(GPa)$ & \hfil \underline{0.056 $\pm$ 0.003} &  \hfil 0.076 $\pm$ 0.003 & \hfil \textbf{0.053 $\pm$ 0.003} & \hfil 0.067 $\pm$ 0.003 \\
Refractive Index & n/a & \hfil \textbf{0.290 $\pm$ 0.078} & \hfil 0.323 $\pm$ 0.071 & \hfil \underline{0.309 $\pm$ 0.086} & \hfil 0.417 $\pm$ 0.080 \\
Phonon Peak & $1/cm$ & \hfil \textbf{29.40 $\pm$ 1.40 } & \hfil 57.76 $\pm$ 5.73 &\hfil \underline{29.71 $\pm$ 1.99} & \hfil 48.86 $\pm$ 7.69 \\
Exfoliation Energy & $meV/atom$ & \hfil \textbf{31.15 $\pm$ 9.566} & \hfil 45.61 $\pm$ 12.24 & \hfil \underline{37.16 $\pm$ 13.68} & \hfil 46.79 $\pm$ 19.92  \\
Perovskites FE & $eV/cell$ & \hfil \underline{0.030 $\pm$ 0.001} & \hfil 0.406 $\pm$ 0.007 & \hfil \textbf{0.027 $\pm$ 0.001 } & \hfil 0.042 $\pm$  0.001 \\
\hline
\end{tabular}
\end{table}

The PST and CrabNet share similarities in their construction. Both use \textit{mat2vec} atomic embeddings and utilize a Transformer architecture with self-attention. While CrabNet uses fractional encoding to embed the multiplicity of each element type, we opt for the PDD-weighted attention mechanism and pooling described by Equations (\ref{eq:attention_mechanism}) and (\ref{eq:pooling}). The PST also uses PDD encoding to add structural information. This could also be done for CrabNet, but the combination of both fractional and PDD encoding is not guaranteed to aid in performance and simple summation of the encodings can cause ambiguities in the final embeddings, reducing performance. Across all properties the PST outperforms CrabNet, further indicating the usefulness of PDD encoding.

The performance disparity between the PST and coGN on formation and band gap energy can be difficult to discern. GNNs allow embeddings at the vertex and edge level. These embeddings can not only carry different information but allow for simultaneous updates to each of these embeddings, adding richness to the learned representation. CoGN takes this further and updates the original edges with message-passing from the derived line graph, allowing the inclusion of angular information. The edges of the line graph are further updated by its line graph, incorporating dihedral angles. These updates allow the model to learn a better representation of a crystal in latent space, which can be necessary for these larger datasets. This points to a current limitation for the PST. By using PDD encoding, we effectively limit opportunities for such updates to a single embedding representing both the atom's properties and its structural behavior.

In Table \ref{tab:matbench_times} the time taken to train and test the PST and coGN are listed. Our model was trained for only 250 epochs while coGN is trained for 800 epochs. While this accounts for a large portion of the disparity, the time taken to train each model per epoch is also faster for our model in all properties except the refractive index. A batch size of 32 is used for datasets containing less than $5,000$ samples. For exfoliation energy and phonon peak, PST takes $68.1\%$ and $92.6\%$ of the time of coGN per epoch. The larger batch size of coGN allows it to have greater GPU utilization and thus, better training efficiency. For properties with greater than $5,000$ samples, the batch size for both models is the same. For these properties, the training time per epoch is between $65-70\%$ of coGN. 

The prediction time disparity is more pronounced. For band gap and formation energy, the PST makes predictions approximately five times faster than coGN. Using nested line graphs introduces significant computational cost, but coGN is able to shrink the size of the graph by using the atoms in the asymmetric unit. The unit cell based approach was initially proposed by CGCNN \cite{CGCNN} and used in several follow-up works \cite{crysXPP,iCGCNN,ALIGNN,crystaltwins}, but the unit cell is inherently ambiguous and unnecessarily large in terms of the number of atoms needed to fully describe a crystal's symmetry. The PDD at a collapse tolerance of exactly zero will have a number of rows less than or equal to the number of atoms in the asymmetric unit. At higher tolerances, this number will be reduced until reaching the number of unique chemical elements in the crystal. 

\begin{table}[h]
    \centering
    \caption{\label{tab:matbench_times}%
\small Single-fold prediction (measured in seconds) and training time (measured in minutes) for the PST (using $k=15$ and a collapse tolerance of $10^{-4}$) and coGN \cite{ruff2023connectivity} on Matbench \cite{matbench}. Training and prediction was done using an Nvidia RTX 3090. Time does not include evaluation of the models on the validation sets or data pre-processing times. }
    \begin{small}
  \begin{tabular}{|p{3.75cm}|c|c|c|c|c|}
    \hline
    \multirow{2}{*}{\hfil Property} &
        \multirow{2}{*}{\hfil Samples} &
      \multicolumn{2}{c|}{Training Time (min.) } &
      \multicolumn{2}{c|}{ Prediction Time (sec.) }\\
      \cline{3-6}
   &  &  PST &  coGN &  PST &  coGN    \\
    \hline
Formation Energy   & 132,752 & 159.1 & 772.1 & 2.79 & 15.25 \\ %
Band Gap           & 106,113 & 126.6 & 602.1 & 2.38 & 11.88 \\ %
Perovskites FE     & 18,928  & 13.41 & 62.37 & 0.31  & 2.93 \\ %
Bulk Modulus       & 10,987  & 8.47  & 42.24 & 0.23 & 1.99 \\ %
Shear Modulus      & 10,987  & 8.38  & 41.23 & 0.22 & 2.05 \\ %
Refractive Index   & 4,764   & 6.89  & 20.23 & 0.12  & 1.29 \\ %
Phonon Peak        & 1,265   & 1.81  & 6.25  & 0.04  & 0.87 \\ %
Exfoliation Energy & 636     & 0.89  & 4.18  & 0.02  & 0.79 \\ %
    \hline
  \end{tabular}
    \end{small} 
\end{table}

\subsubsection{Ablation Study}

In Table \ref{tab:ablation_by_input_component} we list the results for each "component" within the Periodic Set Transformer. In the row indicating PDD as the component, we train and test the model using only the structural information within the PDD. In the "Composition" component we pass the atomic encoding for the elements in the crystal and their concentration in the form of the PDD weights to the model without the PDD encoding. The model is run using $k = 15$ and a collapse tolerance of exactly zero. 

By separating out each component of the model, we can interpret the importance of each to a particular property. Properties that experience a more significant decrease in performance when the PDD encoding is not used, can be ascribed to be more dependent on structural information. In all cases, the combination of both the composition and PDD encoding results in significantly lower error rates. We can conclude that this encoding method is effective in combining the structural and compositional information of a crystal structure.

\begin{table}[h]
    \centering
        \caption{ \small Effect of PDD encoding on prediction MAE of the Materials Project crystals. Results are separated by input components where "Composition" uses only the \textit{mat2vec} atomic embeddings \cite{tshitoyan2019unsupervised_mat2vec} and "PDD" uses only the PDD. Errors in bold indicate the best performance and underlined errors indicate the second-best performance  (lower is better $\downarrow$). }
    \begin{small}
  \begin{tabular}{|p{4cm}|p{2cm}|p{2cm}|p{2cm}|}
    \hline
    \multirow{2}{*}{\hfil Property (units)} &
      \multicolumn{3}{c|}{Component MAE $\downarrow$ } \\
      \cline{2-4}
    & \hfil Composition & \hfil PDD  & \hfil PST    \\
    \hline
    Band Gap \hspace{0.1cm} \small{$eV$} & \hfil \underline{0.273} & \hfil 0.596 & \hfil \textbf{0.212}  \\ 
    Formation \hspace{0.1cm} \small{$eV/atom$} & \hfil \underline{0.088} & \hfil 0.421 & \hfil \textbf{0.032}  \\ 
    Shear Modulus \hspace{0.1cm} \small{$log_{10}(GPa)$} & \hfil \underline{0.107} & \hfil 0.132 & \hfil \textbf{0.075} \\ 
    Bulk Modulus \hspace{0.1cm} \small{$log_{10}(GPa)$} & \hfil \underline{0.080} & \hfil 0.115 & \hfil \textbf{0.055} \\ 
    Refractive Index  & \hfil \underline{0.352} & \hfil 0.451 & \hfil \textbf{0.292} \\ 
    Phonon Peak \hspace{0.1cm} \small{$1 / cm$} & \hfil \underline{50.39} & \hfil 74.71 & \hfil \textbf{27.75} \\ 
    Exfoliation \hspace{0.1cm} \small{$meV/atom$} & \hfil 46.91 & \hfil \underline{39.35} & \hfil \textbf{31.55} \\ 
    Perovskites FE \hspace{0.1cm} \small{$eV/cell$} & \hfil 0.621 & \hfil \underline{0.393} & \hfil \textbf{0.030} \\ 
    \hline
  \end{tabular}
    \end{small} 
    \label{tab:ablation_by_input_component}
\end{table}

In Table \ref{tab:ablation_by_pdd_component} the effect of including the PDD weight in the attention mechanism described in Equation (\ref{eq:attention_mechanism}) and in the pooling layer described by Equation (\ref{eq:pooling}) is listed. A collapse tolerance of zero is used to remove any regularization effect (further described in Section \ref{sec:mp_results_by_tol}). 

The exclusion of weights from both the attention mechanism and pooling decreases accuracy significantly. Doing this removes all indications of multiplicity making discernment of crystals more difficult. The inclusion of the weights in the pooling layer is more impactful than when applied in the attention mechanism. The use of the weights in the pooling layer alone allows the model to perform better when the number of samples in the dataset is low. Datasets with fewer samples likely have less diversity amongst their crystals, making the need for recognizing the multiplicity of atoms less necessary.

\begin{table}[h]
    \centering
        \caption{ \small Effect of including the PDD weights as defined by Equations (\ref{eq:attention_mechanism}) and (\ref{eq:pooling}) on prediction MAE of the Materials Project crystals. Results for "No weights" use mean pooling and a normal softmax function. Errors in bold indicate the best performance and underlined errors indicate the second-best performance (lower is better $\downarrow$). }
    \begin{small}
  \begin{tabular}{|p{3.75cm}|p{2.1cm}|p{1.9cm}|p{1.7cm}|p{1.6cm}|}
    \hline
    \multirow{2}{*}{\hfil Property (units)} &
      \multicolumn{4}{c|}{PDD Weight Inclusion MAE $\downarrow$ } \\
      \cline{2-5}
    & \hfil No Weights & \hfil Attention Only & \hfil Pooling Only & \hfil PST    \\
    \hline
    Band Gap \hspace{0.1cm} \small{$eV$}               & \hfil 0.278 & \hfil 0.244 & \hfil \underline{0.219} & \hfil \textbf{0.212}  \\
    Formation \hspace{0.1cm} \small{$eV/atom$}         & \hfil 0.045 & \hfil 0.037 & \hfil \underline{0.035} & \hfil \textbf{0.032}  \\
    Shear Modulus \hspace{0.1cm} \small{$log_{10}(GPa)$} & \hfil 0.080 & \hfil 0.077 & \hfil \underline{0.076} &  \hfil \textbf{0.075} \\
    Bulk Modulus \hspace{0.1cm} \small{$log_{10}(GPa)$}  & \hfil 0.059 & \hfil 0.059 & \hfil \underline{0.056} &  \hfil \textbf{0.055} \\
    Refractive Index                                     & \hfil 0.314 & \hfil \textbf{0.284} & \hfil \underline{0.288} & \hfil 0.292 \\
    Phonon Peak \hspace{0.1cm} \small{$1 / cm$}          & \hfil 31.02 & \hfil 28.84 & \hfil \underline{27.96} &  \hfil \textbf{27.75} \\
    Exfoliation \hspace{0.1cm} \small{$meV/atom$}        & \hfil 35.59 & \hfil 32.52 & \hfil \underline{31.83} & \hfil \textbf{31.55} \\
    Perovskites FE \hspace{0.1cm} \small{$eV/cell$}       & \hfil 0.031 & \hfil \underline{0.030} & \hfil 0.031 & \hfil \textbf{0.030}  \\
    \hline
  \end{tabular}
    \end{small} 

    \label{tab:ablation_by_pdd_component}
\end{table}

\subsection{Prediction of Jarvis-DFT Properties}

The \textit{Jarvis-DFT} dataset \cite{choudhary2020joint_jarvis_og} is a commonly used set of materials with VASP \cite{kresse1996efficiency_vasp} calculated properties. The list of properties computed for the materials within the dataset is more extensive than that of the Materials Project. Its inclusion provides further evidence of the robustness of the model on an even wider variety of crystal properties. 



The prediction MAE produced by PST and \textit{Matformer} for $12$ different properties from the dataset are included in Table \ref{tab:jarvis_results}. For Matformer, we retrain the model to ensure the training and testing sets are the same. We use the default parameters for the model defined by the authors' codebase. We make one alteration to the training procedure; the number of epochs trained is reduced to 250. The number of epochs is the same as for our model. 

\begin{table}[h]
\small
\caption{\label{tab:jarvis_results}%
\small Prediction MAE on the properties of the Jarvis-DFT dataset using the PST and Matformer. Results for Matformer \cite{matformer} are included for comparison. PST uses PDD encoding with $k=15$ and a collapse tolerance of $10^{-4}$. Bolded values indicate the best performance. The Mean-Absolute-Deviation (MAD) of the test set is included. }
\centering
\begin{tabular}{|p{3.9cm}|p{1.7cm}|c|c|c|c|c|}
\hline 
Property&
Units&
Samples&
Test MAD&
PST&
Matformer
\\
\hline
Formation Energy & $eV/atom$ & 55,723 & 0.87 & 0.047 & \textbf{0.033}\\ %
Band Gap (OPT) & $eV$ & 55,723 & 0.99 & 0.172 & \textbf{0.150} \\
Total Energy & $eV/atom$ & 55,723 & 1.78 & 0.051 & \textbf{0.036} \\
Ehull & $eV$ & 55,371 & 1.14 & \textbf{0.052} & 0.072\\
Bulk Modulus & $GPa$ & 19,680 & 52.80 & \textbf{10.76} & 11.70 \\ 
Shear Modulus & $GPa$ & 19,680 & 27.16 & \textbf{9.523} & 10.13 \\ 
Band Gap (MBJ) & $eV$ & 18,172 & 1.79 & \textbf{0.289} & 0.304 \\ 
Spillage & - & 11,377 &  0.52  & \textbf{0.367} & 0.373\\
SLME (\%) & - & 9,068 &  10.93  & \textbf{4.61} & 4.712\\ 
Max. piezo. stress coeff $(e_{ij})$ & $Cm^{-2}$ & 4,799 & 0.26 & \textbf{0.127} & 0.243 \\ 
Max. piezo. strain coeff $(d_{ij})$ & $CN^{-1}$ & 3,347 & 24.57 & \textbf{13.09} & 18.03 \\ 
Exfoliation Energy & $meV/atom$  & 813 & 62.63 & \textbf{30.91} & 55.04\\

\hline
\end{tabular}
\end{table}

The PST outperforms Matformer in nine of the twelve properties tested. In particular, properties for which data is sparse yield results that favor the PST significantly (i.e. exfoliation energy, $e_{ij}$ and $d_{ij}$). Jarvis-DFT has two band gap values that are computed for its crystals, one which uses the optimized Becke88 functional (OPT) \cite{klimevs2009chemical_opt_bg} and the other uses the Tran-Blaha modified Becke Johnson potential (MBJ)\cite{tran2009accurate_mbj_functionals}. The latter is more accurate (when compared to experimentally observed values) but also more computationally expensive. For this reason, there are significantly fewer computed values in the database. Interestingly, the PST produces a smaller error for the more accurate band gap values compared to Matformer, but a larger error for the less accurate OPT calculated values. A possible reason for this is the smaller sample size for which the PST has shown to be more effective. The disparity in performance for formation and total energy can be attributed to Matformer's architecture which uses a GNN that updated both node and edge embeddings. This additional level of expression is helpful particularly when the size of the data grows larger, though it does come with added computational cost. 

Matformer has been shown to produce even better results than the previous state-of-the-art model \textit{ALIGNN} \cite{ALIGNN} while taking roughly a third of the time to do both training and prediction.  In Table \ref{tab:jarvis_times}, the training and prediction time for each of the properties in the Jarvis-DFT dataset is reported for the PST and Matformer. For the training time, the validation and pre-processing times are not included. The prediction time listed is the number of seconds taken to make predictions on the test set.

\begin{table}[h]
    \centering
    \caption{\label{tab:jarvis_times}%
\small Prediction (measured in seconds) and training time (measured in minutes) for the PST and Matformer \cite{matformer} on Jarvis-DFT datasets. Training and prediction was done using an Nvidia RTX 3090. Time does not include evaluation of the models on the validation sets or data pre-processing times. }
    \begin{small}
  \begin{tabular}{|p{3.75cm}|c|c|c|c|c|}
    \hline
    \multirow{2}{*}{\hfil Property} &
        \multirow{2}{*}{\hfil Samples} &
      \multicolumn{2}{c|}{Training Time (min.) } &
      \multicolumn{2}{c|}{ Prediction Time (sec.) }\\
      \cline{3-6}
   &  &  PST &  Matformer &  PST &  Matformer    \\
    \hline
Formation Energy  & 55,723  & 41.36 & 345.8 & 0.329 & 29.77\\ 
Band Gap (OPT)    & 55,723  & 41.62 & 343.9 & 0.347 & 29.86\\ %
Total Energy      & 55,723  & 41.65 & 349.1 & 0.349 & 29.79\\ %
Ehull             & 55,371  & 40.69 & 348.9 & 0.352 & 28.93\\ 
Bulk Modulus      & 19,680  & 14.12 & 93.33 & 0.135 & 11.12 \\ 
Shear Modulus     & 19,680  & 14.45 & 93.70 & 0.123 & 10.69 \\ 
Band Gap (MBJ)    & 18,172  & 13.38 & 118.7 & 0.107 & 9.71 \\ 

Spillage          & 11,377  & 5.74 & 70.8  & 0.066  & 6.01\\ 
SLME (\%)         & 9,068   & 4.62 & 58.75 & 0.055  & 4.82 \\ 

Max. piezo. stress coeff $(e_{ij})$ & 4,799 & 3.52 & 23.15 & 0.029 & 2.57 \\ 
Max. piezo. strain coeff $(d_{ij})$ & 3,347 & 2.44 & 15.38 & 0.026 & 1.79 \\ 

Exfoliation Energy                  & 813   & 0.63 & 5.30   & 0.008 & 0.41\\ 

    \hline
  \end{tabular}
    \end{small} 
\end{table}

In the closest training time comparison, the PST is still more than six times faster than Matformer. The training times for all properties fall between six and twelve times faster for the PST compared to Matformer. The performance increase can be attributed to several factors. Primarily, Matformer relies on a line graph (similar to coGN \cite{ruff2023connectivity}) in order to update edge embeddings. While this increases the information used and leads to richer learned embeddings, the size of line graphs is considerably larger than the graph they are derived from. This, in turn, incurs a higher computational cost. 

The difference in prediction times is more significant. Exfoliation energy is predicted over fifty times faster using the PST than with Matformer. This is the closest the two models perform to each other. Notably, exfoliation energy also has the fewest samples. For the bulk of the other properties, the speedup ranges between eighty and ninety times faster for the PST.

\section{Conclusion}
The PDD is a representation for periodic crystals which is invariant to rigid motion and independent of unit cell. By using weights and creating a distribution, the PDD is able to represent an infinitely spanning object by its finite forms of behavior. Further, by collapsing rows in the PDD, the resulting representation can also be much smaller in comparison to the number of atoms within the unit cell, even when the cell is reduced. 

The model is applied to the crystals of the Materials Project and Jarvis-DFT on a variety of material properties. Despite using less information in the model than more commonly employed graph-based models, the PST is able to produce results on par or even exceeding that of models like coGN and Matformer while taking significantly less time to train and make predictions.

\bibliographystyle{unsrt}  
\bibliography{references}  

\appendix
\newpage

\begin{center}
\textbf{\Large{Accelerating Material Property Prediction using Generically Complete Isometry Invariants: Supplemental Material}}

\vspace{0.5cm}
Jonathan Balasingham, Viktor Zamaraev, and Vitaliy Kurlin
\end{center}

\vspace{1cm}

\section{Equivalence of Distributions in the PST}\label{sec:proof}

If a PDD is arbitrarily expanded or collapsed, the PST should produce the same results as these PDDs are considered equivalent. The same can be said of any input distribution. This is proven here:

\begin{lemma}
Let $A$ and $B$ be weighted multisets each containing elements from the set $S = \{\vx_1,\ldots \vx_n\}$. Each element $\vx_i \in \R^{1 \times n}$ occurs with multiplicity $m_i^{(a)} \in \mathbb{N}^{+}$ and $m_i^{(b)} \in \mathbb{N}^{+}$ in $A$ and $B$ respectively. Each element also carries weight $w_i^{(a)} \in \R^{+}$ and $w_i^{(b)} \in \R^{+}$ for $A$ and $B$ respectively. The application of the Periodic Set Transformer will yield equivalent output if

\begin{equation}\label{eq:condition}
    w_i^{(a)} m_i^{(a)}  =   w_i^{(b)} m_i^{(b)}, \hspace{0.5cm} \forall i \in \{1,\ldots,n\}
\end{equation}

\end{lemma}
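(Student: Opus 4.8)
The plan is to show that every operation in the Periodic Set Transformer---the weighted softmax attention, the residual $SLP$ updates, and the weighted pooling---depends on each distinct element $\vx_i$ only through the combined quantity $M_i := w_i m_i$, which I will call the \emph{effective mass} of $\vx_i$. Since the hypothesis (Equation \ref{eq:condition}) states exactly that $M_i^{(a)} = M_i^{(b)}$ for all $i$, and the two multisets draw their elements from the same underlying set $S$, this forces the two outputs to coincide.

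First I would record the base case. Writing $A$ as an uncollapsed matrix, each distinct element $\vx_i$ occupies $m_i^{(a)}$ identical rows, and since the initial embedding $\mX^{(0)} = \mR \mW_d$ (or $\mR\mW_s + \mT\mW_c$ with PDD encoding) is applied row-wise, all copies of $\vx_i$ receive an identical embedding row that does not depend on the multiplicity. The same holds for $B$, and because the embedding map is the same linear map applied to the same element $\vx_i$, the per-element embedding is identical for $A$ and $B$. Thus after the initial embedding both inputs are described by the same collection of distinct embedding rows, differing only in how many copies each carries and with what weight.

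The central step is to show the attention output is a function of the effective masses alone. Fix a query row belonging to distinct element $\vx_p$; its query vector $\vq_p$ depends only on the embedding of $\vx_p$. For any copy of distinct element $\vx_j$, the key $\vk_j$ and value $\vv_j$ are likewise functions of the embedding of $\vx_j$ only, so the logit $z_{pj} = \vq_p \vk_j^\top / \sqrt{d}$ and value $\vv_j$ are shared across all $m_j$ copies. Grouping the weighted softmax of Equation \ref{eq:attention_mechanism} by distinct element collapses the sum over all rows into a sum over distinct elements with the weight of each copy accumulated:
\begin{equation}
\big(\sigma\mV\big)_p = \frac{\sum_{j=1}^{n} m_j w_j\, e^{z_{pj}}\, \vv_j}{\sum_{j=1}^{n} m_j w_j\, e^{z_{pj}}} = \frac{\sum_{j=1}^{n} M_j\, e^{z_{pj}}\, \vv_j}{\sum_{j=1}^{n} M_j\, e^{z_{pj}}}.
\end{equation}
This involves the two multisets only through the effective masses $M_j$ and the shared per-element quantities $z_{pj}$ and $\vv_j$, so it takes the same value for $A$ and $B$ under the hypothesis. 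The residual connection and the row-wise $SLP$ then send identical per-element embeddings to identical updated per-element embeddings, and I would close the step by induction on the $l$ layers: if $\mX^{(\ell)}$ has matching per-element rows and the masses agree, the same grouping shows $\mX^{(\ell+1)}$ does too. The multi-head case is identical, since the masses enter each head's softmax in the same way and concatenation preserves equality.

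Finally, the pooling step (Equation \ref{eq:pooling}) sums $w_i \vx_i$ over all rows; grouping by distinct element gives $\vx = \sum_{j=1}^{n} M_j\, \vx_j^{(l)}$, again a function of the effective masses and the shared final embeddings, hence equal for $A$ and $B$, and the deterministic output perceptron preserves this equality. I expect the main obstacle to be organizational rather than deep: one must verify that identical elements retain identical embeddings \emph{through every layer}, so that ``grouping by distinct element'' is legitimate at each stage, which is precisely what the layer-wise induction secures. The weighted softmax is engineered so that weight and multiplicity enter only as their product $M_i$, and this is the crux that makes the whole argument go through.
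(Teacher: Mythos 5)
Your proposal is correct and follows essentially the same route as the paper's proof: rewrite every multiset sum as a sum over distinct elements with multiplicity factors, note that the logits $a_{ij}$ and values $\vv_j$ are independent of weight and multiplicity, and conclude that the weighted softmax and the pooling depend on each element only through the product $w_i m_i$, which the hypothesis fixes. If anything, your version is tidier than the paper's: the paper derives a ratio relation $\alpha_{ij}^{(a)} = \alpha_{ij}^{(b)}\, w_i^{(a)}/w_i^{(b)}$ and back-substitutes (with some index slips along the way), whereas you show both outputs equal one common expression in the effective masses $M_j$, and you make explicit the per-layer induction and multi-head/pooling bookkeeping that the paper leaves implicit.
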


\begin{proof}
To prove that the output of the PST is equivalent for $A$ and $B$, it is sufficient to prove that the output of $\sigma$ (defined in Equation (\ref{eq:attention_mechanism})) and the pooling layer (defined in Equation (\ref{eq:pooling})) are the same for $A$ and $B$.
Let $\vq_i = \vx_i\mW_Q$, $\vk_i = \vx_i\mW_K$, and $\vv_i = \vx_i\mW_V$ be the query, key, and value vectors produced by the weight matrices $\mW_Q, \mW_K, \mW_V \in \R^{n \times d}$. First, note the pre-softmax attention weight from $\vx_i$ to $\vx_j$ can expressed as:
\begin{align}
    a_{ij} &= \frac{\vq_i \vk_j^{T}}{\sqrt{d}} 
\end{align}
and is independent of both weight and multiplicity. Further, notice that the summation of an expression over $A$ or $B$ can be rewritten using its multiplicities. For $A$ this is done like so:

$$ \sum_{t=1}^{|A|} (\cdot) = \sum_{s=1}^{n} m_s^{(a)} (\cdot)  $$
Using this equivalence, the function $\sigma$, used to calculate the attention weight from $\vx_i$ to $\vx_j$ for $A$ and $\sB$, is written as:
\begin{align}\label{eq:post_softmax_attention_A}
    \alpha_{ij}^{(a)} &= \frac{w_i^{(a)} exp(a_{ij})}{\sum_{k=1}^{n} w_k^{(a)} m_k^{(a)} exp(a_{ik}) } 
\end{align}
and
\begin{align}\label{eq:post_softmax_attention_B}
    \alpha_{ij}^{(b)} &= \frac{w_i^{(b)} exp(a_{ij})}{\sum_{k=1}^{n} w_k^{(b)} m_k^{(b)} exp(a_{ik}) }
\end{align}
Using the condition provided by Equation (\ref{eq:condition}), Equation (\ref{eq:post_softmax_attention_B}) can be rewritten as:
\begin{align}
    \alpha_{ij}^{(b)} &= \frac{w_i^{(b)} exp(a_{ij})}{\sum_{k=1}^{n} w_k^{(a)} m_k^{(a)} exp(a_{ik}) } \\
    \sum_{k=1}^{n} w_k^{(a)} m_k^{(a)} exp(a_{ik}) &= \frac{w_i^{(b)} exp(a_{ij})}{  \alpha_{ij}^{(b)} }
\end{align}
Substituting back into Equation (\ref{eq:post_softmax_attention_A}) gives us:
\begin{align}
        \alpha_{ij}^{(a)} &= \alpha_{ij}^{(b)} \frac{w_i^{(a)} exp(a_{ij})}{ w_i^{(b)} exp(a_{ij}) } \\
        \alpha_{ij}^{(a)} &= \alpha_{ij}^{(b)} \frac{w_i^{(a)} }{ w_i^{(b)}  }
\end{align}
The attention weights produced from $A$ can now be expressed in terms of the attention weights produced from $B$. The $j^{th}$ entry in the attention vector for $\vx_i$ in $A$ is:
\begin{align}
    y_{ij}^{(a)} &= \sum_{j=1}^{n} \alpha_{ij}^{(a)} m_j^{(a)} v_{ji} \\
    & = \sum_{j=1}^{n} \alpha_{ij}^{(b)} \frac{w_i^{(a)} }{ w_i^{(b)}  } m_j^{(a)} v_{ji} \\
    & = \sum_{j=1}^{n} \alpha_{ij}^{(b)} \frac{w_i^{(a)} }{ w_i^{(b)}  } \bigg(\frac{w_i^{(b)} m_i^{(b)}}{w_i^{(a)}}\bigg) v_{ji} \\
    & = \sum_{j=1}^{n} \alpha_{ij}^{(b)} m_i^{(b)} v_{ji}\\
    & = y_{ij}^{(b)}
\end{align}
Thus, the resulting embeddings from the attention mechanism are equivalent. While the embeddings themselves are equivalent, the cardinality for each embedding still differs according to each multisets' multiplicity. The pooling described by Equation (\ref{eq:pooling}) fixes this. Let $\vz_i$ be the final embedding for $\vx_i$. The output vector $\vz$ from the pooling for $A$ is the sum:
\begin{align}
    \vz^{(a)} &= \sum_{i=1}^{n} w_i^{(a)} m_i^{(a)} \vz_i
\end{align}
Substituting Equation (\ref{eq:condition}) again we get,
\begin{align}
    \vz^{(a)} &= \sum_{i=1}^{n} w_i^{(b)} m_i^{(b)} \vz_i \\
    & = \vz^{(b)}
\end{align}
Thus, the output of the PST for both $A$ and $B$ are equivalent.
\end{proof}

\section{Details of MatBench and Jarvis-DFT Datasets}\label{sec:dataset_details}

\subsection{Materials Project}

The crystals in the Materials Project \cite{MP1} are acquired through the use of \textit{MatBench}\cite{matbench}. MatBench provides standardized training and test sets to ensure that property prediction models can be compared on equal footing. Each training and testing set of the five-fold cross-validation consists of $80\%$ and $20\%$ of the total data respectively. Validation is taken from the training set. In our application we use just $1\%$ of the training data for validation. Due to the relatively simple architecture of the PST, overfitting a target property is more difficult and the typical procedure of selecting the model that performed the most accurately on the validation set is unnecessary. Instead, the PST is trained until the validation error converges.

In Table \ref{tab:matbench_dataset} we list the details of the MatBench dataset for the properties we tested. The information concerning the source of the data and how each of the properties for the crystals therein were calculated as follows.

\begin{itemize}
    \item Formation Energy - Crystals from this dataset are sourced directly from the Materials Project \cite{MP1}. The property is measured in $eV/atom$ and is calculated using DFT-GGA \cite{perdew1996generalized_PBE}.
    \item Band Gap Energy - Crystals in this dataset are taken from the Materials Project\cite{MP1}. The property is measured in $eV$ and is calculated using DFT-GGA \cite{perdew1996generalized_PBE}.
    \item Bulk/Shear Modulus - Crystals are sourced from the Materials Project. The value of the property is measured in $log_{10}(GPa)$. The elastic modulus tensor is derived from the full stress tensor which is calculated using the projector augmented wave method with DFT-GGA functionals \cite{blochl1994projector_paw,perdew1996generalized_PBE}.
    \item Perovskite Formation Energy - Crystals are taken from the work produced by Castelli et al. \cite{castelli2012new_perovskites}. Property values are measured in $eV/unit$ $cell$ and are calculated with DFT-GGA functionals using the RPBE approximation \cite{hammer1999improved_rpbe_approx}.
    \item Phonon Peak - Crystals are taken from the Materials Project. Property values are measured in $1/cm$ and are calculated using DFPT-GGA with PBEsol \cite{petretto2018high_phonon,perdew2008restoring_pbesol}.
    \item Refractive Index - The crystals are taken from the Materials Project. Property values are derived directly from the dielectric tensor \cite{petousis2017high_dielectric}. Dielectric tensors are calculated using Density Functional Perturbation Theory using VASP with PBE+U exchange-correlation functional\cite{perdew1996generalized_PBE,dudarev1998electron_plus_U} and Projector Augmented Wave psuedopotentials \cite{blochl1994projector_paw}.
    \item Exfoliation Energy - Crystals are part of the Jarvis-DFT \cite{choudhary2020joint_jarvis_og} \textit{2D} materials dataset, but are originally sourced from the Materials Project and ICSD \cite{hellenbrandt2004inorganic_icsd}. Property values are measured in $meV/atom$ and are calculated using DFT with optB88 functional \cite{choudhary2017high_jdft2d}. Note, that these are different from the crystals mentioned below as part of the Jarvis-DFT dataset for exfoliation energy, which are part of the \textit{3D} materials dataset. 
\end{itemize}

\begin{table}[h]
    \centering
        \caption{Details of the MatBench dataset by property, including the number of samples in each dataset and the mean-absolute-deviation (MAD).}
    \begin{tabular}{|c|c|c|c|}
    \hline
        Property & Units & Samples & MAD \\
        \hline
        Formation Energy & $eV/atom$ & 132,752 & 1.006 \\
        Band Gap Energy & $eV$ & 106,113 & 1.327 \\
        Shear Modulus & $log_{10}(GPa)$ & 10,987 & 0.293  \\
        Bulk Modulus & $log_{10}(GPa)$ & 10,987 & 0.290  \\
        Refractive Index & n/a & 4,764 & 0.809 \\
        Phonon Peak & $1/cm$ & 1,265 & 323.787  \\
        Exfoliation Energy & $meV/atom$ & 636 & 67.202 \\
        Perovskites FE & $eV/cell$ & 18,928 & 0.566 \\
    \hline
    \end{tabular}
    \label{tab:matbench_dataset}
\end{table}

\subsection{Jarvis-DFT}

Much of the Jarvis-DFT database consists of crystals taken from the Materials Project \cite{MP1} and the ICSD \cite{hellenbrandt2004inorganic_icsd}. After the initial structures are obtained, they are geometrically optimized using DFT, and then their properties are calculated. Details for each property are as follows:

\begin{itemize}
    \item Formation Energy - The crystals in this dataset are taken from the Materials Project. Their values are measured in $eV/atoms$ and are calculated using OptB88vdW functional \cite{klimevs2009chemical_opt_bg}.
    \item Band Gap Energy (OPT) - The crystals in this dataset are taken from the Materials Project. The energy values are measured in $eV$ and are calculated using the OptB88vdW functional \cite{klimevs2009chemical_opt_bg,choudhary2018computational_opt_bandgap_og}.
    \item Total Energy - The crystals in this dataset are taken from the Materials Project and ICSD. Their values are measured in $eV/atom$ and are calculated using OptB88vdW functional \cite{klimevs2009chemical_opt_bg}.
    \item Bulk/Shear Modulus - The crystals are sourced from the Materials Project. Property values are measured in $GPa$ and are calculated using the projector augmented wave method using the OptB88vdW functional \cite{choudhary2018elastic_jarvis_bulk_shear_mod}.
    \item Band Gap Energy (MBJ) - The crystals are taken from the Materials Project. The property values are measured in $eV$ and are computed using the modified Becke-Johnson potential\cite{tran2009accurate_mbj_functionals, choudhary2018computational_opt_bandgap_og}.
    \item Spillage - Crystals are taken from the Materials Project. The structures are relaxed using the OptB88vdW functional but Spin-Orbit spillage is calculated using the PBE functional \cite{choudhary2019high_spillage}.
    \item SLME - Crystals are taken from the Materials Project. Spectroscopic Limited Maximum Efficiency (SLME) is calculated from the dielectric function and band gap (which are computed using DFT)\cite{choudhary2020joint_jarvis_og}.
    \item Maximum Piezoelectric Stress Coefficient $(e_{ij})$ - Crystals taken from the Materials Project. Value is measured in $Cm^{-2}$ and is computed by taking the maximum entry from the Piezoelectric stress tensor. 
    \item Maximum Piezoelectric Strain Coefficient $(d_{ij})$ - Crystals taken from the Materials Project. Property value is measured in $CN^{-1}$ and is computed by taking the maximum entry from the Piezoelectric strain tensor. 
    \item Exfoliation Energy - Property values are measured in $meV/atom$ and are calculated using DFT with optB88 functional \cite{choudhary2020joint_jarvis_og}. These structures are different from the crystals mentioned as part of the MatBench dataset for exfoliation energy, which are part of the \textit{2D} materials dataset. 
\end{itemize}

\section{Applications of the Pointwise Distance Distribution}\label{sec:how_to_use_pdd}

In definition \ref{def:pdd} we provided a formal definition for the PDD. Here, we will begin by providing an example of its construction. 

\begin{figure*}[h!]
    \centering
        \includegraphics[width=.4\textwidth]{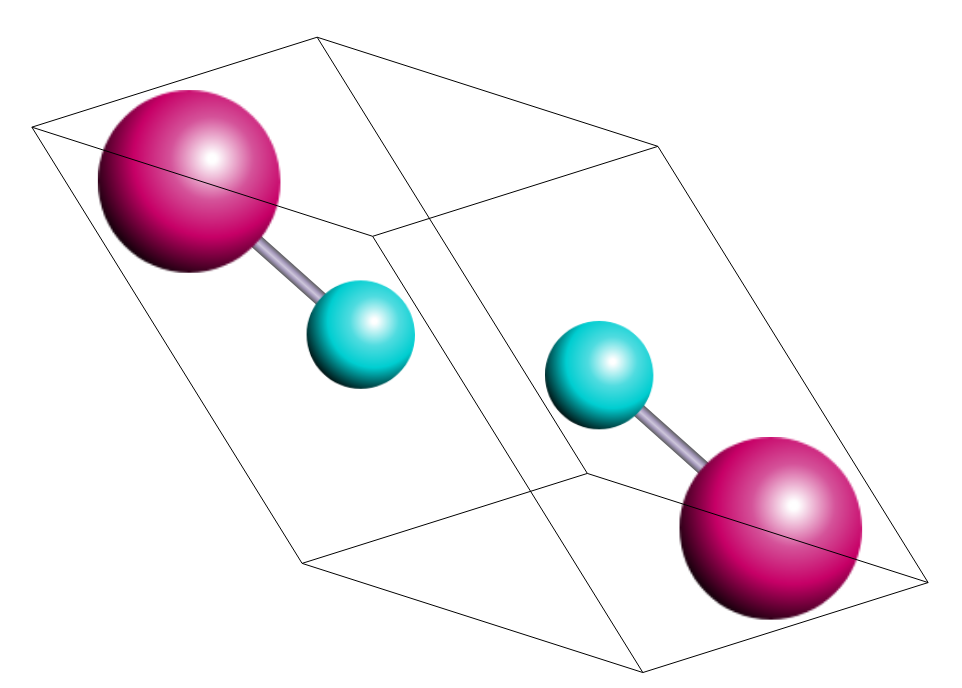}
        \caption{The unit cell of Lutetium-Silicon; Silicon is colored in teal and Lutetium in magenta.}
        \label{fig:LuSi}
\end{figure*}

Consider Lutetium-Silicon which is shown in Figure \ref{fig:LuSi}. The unit cell pictured contains a total of four atoms, 2 of which are Lutetium and 2 of which are Silicon. If we use  $k=2$ nearest neighbors, the PDD matrix of this periodic set $S$ before rows are grouped is
$$ PDD(S; k) = \begin{pmatrix}
0.25 & 2.481 & 2.481 \\
0.25 & 2.481 & 2.481 \\
0.25 & 2.881 & 2.881 \\
0.25 & 2.881 & 2.881
\end{pmatrix} $$
where the first column contains the weights for each row (atom). The second is the distance to the nearest neighbor and the third, is the distance to the second nearest neighbor. The first two rows are identical, as are the final two. Because of this, each of the two groups of rows can be grouped into a single row like so,
$$ PDD(S; k) = \begin{pmatrix}
0.5 & 2.481 & 2.481 \\
0.5 & 2.881 & 2.881
\end{pmatrix} $$
The rows are already lexicographically ordered so this is the finalized PDD. If a different unit cell is selected, this collapsing of matrix rows will continue to yield the same result as the proportion of each atom will not change.

The PDDs of two periodic crystals can be compared using the Earth Mover's Distance \cite{rubner2000earth}. The weights of each PDD can be considered the distribution we are comparing in the minimum flow problem. We can visualize a set of crystals by projecting these distances onto a two or three-dimensional plane.

Subfigure \ref{fig:scatter2d} shows the projection of the pairwise distances of the PDDs of one hundred samples from each of the three crystals in this experiment using Multi-Dimensional Scaling (MDS). This technique attempts to map a set of pairwise distances to specific $n$-dimensional space by minimizing the difference between the actual pairwise distances and the distances between the points in the projected space. Even with this error, we are able to establish three distinct clusters in two-dimensional space according to their respective crystal type. 

\begin{figure*}[h!]
    \begin{subfigure}[b]{.49\columnwidth}
        \includegraphics[width=\textwidth]{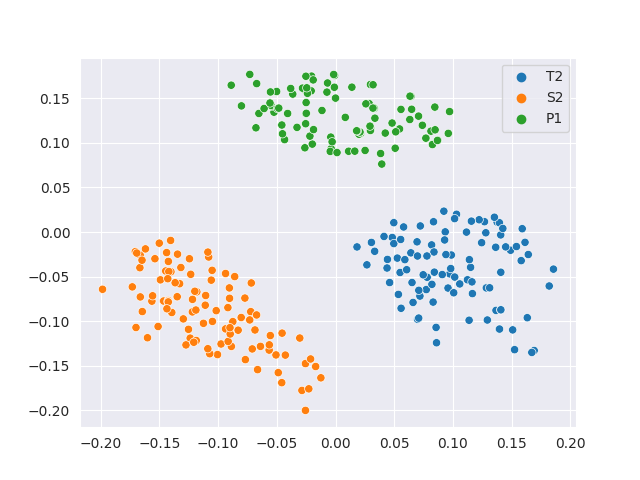}
        \caption{MDS of sampled data by crystal type}
        \label{fig:scatter2d}
    \end{subfigure}
    \hfill
    \begin{subfigure}[b]{.49\columnwidth}
        \includegraphics[width=\textwidth]{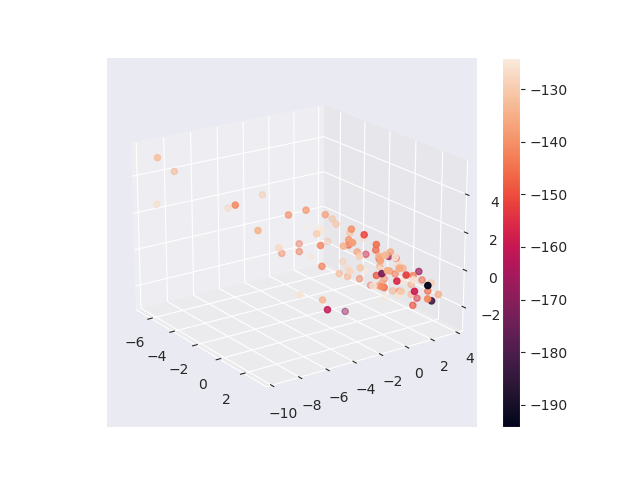}
        \caption{MDS of T2 crystals}
        \label{fig:T2scatter3d}
    \end{subfigure}

    \caption{Multi-dimensional scaling projection \cite{Borg2005_mds} on to $\mathbb{R}^2$ and $\mathbb{R}^3$ for the pairwise distances of crystals between each other. Subfigure (a) projects three types of crystals using distances created by using the Earth Mover's Distance between PDDs for $k=10$. Subfigure (b) is created using the MDS projection of pairwise distances between PDDs at $k=100$ for one hundred random samples from the T2 crystals colored by lattice energy measured in kJ/mol.}
\end{figure*}

Subfigure \ref{fig:T2scatter3d} shows the projection of the pairwise distances between the crystals' PDD onto three-dimensional space for the T2 dataset. The colors of the points in each plot signify the lattice energy for the crystal. The distinction here is not as pronounced; this is to be expected as the sampled crystals are much more similar in their structure and thus the distances between the PDDs are smaller. Nonetheless, there is a discernible trend, and crystals that congregate near each other do share similar lattice energies.

\section{Prediction of Lattice Energy}\label{sec:le_results}

Here, the dataset considered contains simulated molecular crystals created by Pulido et al. \cite{Pulido2017-ff} during the crystal structure prediction using quasi-random sampling \cite{case2016convergence_quasi_random}. This data is subsetted by the underlying molecule, e.g. T2. During structure prediction, crystals are generated while traversing the potential energy surface and their lattice energy is calculated using \textit{DMACRYS} \cite{price2010modelling_lattice_energy_calculations} to determine their stability. 

Lattice energy is defined as the energy released during the crystallization of the constituent molecules into a lattice. Prediction of lattice energy is done in three different scenarios. In the first, the model will be applied to a single set of molecular crystals with the same composition using $80\%/10\%/10\%$ training, validation, and testing splits. Next, the model is applied to multiple sets of crystals each with different underlying molecules using the same splits. The final experiment consists of the application of the model to a set of crystals with an underlying molecule it has not seen in training. The seen data is split $90\%/10\%$ for training and validation. To make predictions the PST described in section \ref{sec:PST} is used with only the PDD as input and no knowledge of the composition.

Invariants are usually used to discern crystals by measuring differences between their structure. Here, the goal is to demonstrate the effectiveness of using an invariant as a representation for a machine learning algorithm. Even when compositional information is not present, the PDD can distinguish crystals with the EMD from the changes in the pairwise distances that occur when the species of atoms are changed. Whether the same distinction can be made in the context of a learning algorithm has previously not been shown.

We make a comparison to another invariant that has been used to predict lattice energy. Average-minimum-distance \cite{widdowson2022average} was used as input for a Gaussian regression model \cite{amd_le}. In Table \ref{tab:exp1} we list the performance on the test set of this AMD model to allow comparison between invariants. 

Our model reduces the mean-absolute-error (MAE) rate by $21\%$ compared to the Gaussian Process Regression technique which utilizes AMD \cite{amd_le}. The mean-absolute percentage error (MAPE) is also reduced by $1.63\%$. While we use $k=60$ nearest neighbors for constructing the PDD, the AMD model uses $k=500$ to achieve its best results. Despite the use of this additional information, the model using the PDD still performs more accurately.

\begin{table}[h]
\small
\caption{\label{tab:exp1}%
\small Results of lattice energy prediction using the PDD with the PST and AMD with Gaussian regression on three different tasks using $80\%/10\%/10\%$ training, validation, and testing splits. \textbf{(a)} uses only experimentally generated structures of the T2 molecular crystal based on triptycene. \textbf{(b)} adds two additional sets of molecular crystals, P1 (based on pentiptycene) and S2 (based on spiro-biphenyl). \textbf{(c)} uses P1, P1M (methylated analogue of P1) and P2 (benzimidazolone series pentiptycene) in the training and validation set using a $90\%/10\%$ split. The test set consists only of P2M, the methylated analogue of P2.} 
\centering
\begin{tabular}{|c|c|c|c|c|c|c|c| }
\hline
&
Train&
Train Size&
Test&
Test Size&
Invariant&
MAE (kj/mol) $\downarrow$&
MAPE $\downarrow$ \\
\hline
\multirow{2}{*}{\textbf{(a)}} & \multirow{2}{*}{T2} & \multirow{2}{*}{4,630} & \multirow{2}{*}{T2} & \multirow{2}{*}{578} & AMD & \hfil 4.79 & \hfil 4.31\%\\
& & & & & PDD & \hfil 3.76 & \hfil 2.68\%\\
\hline 
\multirow{2}{*}{\textbf{(b)}} & \multirow{2}{*}{T2, P1, S2} & \multirow{2}{*}{14,547} & \multirow{2}{*}{T2, P1, S2} & \multirow{2}{*}{1,819} & AMD & \hfil 4.68 & \hfil 2.83\%\\
& & & & & PDD & \hfil 4.11 & \hfil 2.52\%\\
\hline
\multirow{2}{*}{\textbf{(c)}} & \multirow{2}{*}{P1,P1M,P2} & \multirow{2}{*}{22,995} & \multirow{2}{*}{P2M} & \multirow{2}{*}{7,352} & AMD & \hfil 12.99 & \hfil 6.89\% \\
& & & & & PDD & \hfil 7.24 & \hfil 3.89\%\\
\hline
\end{tabular}
\end{table}

In the second row of Table \ref{tab:exp1} we list the results of the second experiment. The previous task is extended to a dataset of crystals that contains different underlying molecules. These molecules have different compositions. This compositional information is not contained within the PDD (and thus, not in our input). The model will have to discern crystals solely by their structure. While the overall MAE has increased slightly, the percentage error has decreased. The domain on which the lattice energies lie is different for each type of crystal. Using the PDD alone is enough for the algorithm to distinguish the crystal types and predict lattice energy accordingly.

The final experiment uses the data from the P1, P1M, and P2 crystals in the training and validation data. The test set consists of the P2M crystal, which is not part of the either training or validation set. This experiment is the closest to real-world conditions in which new crystals often arise and finding the stable forms is crucial, but information on their lattice energy is unavailable.

When lattice energy is calculated using ab initio calculations, the range of the energies varies from crystal to crystal. When introducing a new type of crystal for our algorithm to make predictions on, this can become a problem as extrapolation to unvisited parts of the lattice energy range can result in high error rates. Fortunately, lattice energies between different types of crystals, are not usually meant to be compared. Instead, they are generated for potential polymorphs of a crystal in an effort to find those with the highest stability for synthesis. We can make use of this fact when applying our model to novel crystals. It is not necessary to predict the correct range of lattice energies; instead, the model needs to be able to predict the lattice energies of the various structures such that their ordering according to their lattice energy is correct. This task could feasibly be turned into a \textit{learning-to-rank} problem \cite{learning_to_rank}, but as a regression task, it allows for a more general approach since the predicted lattice energies can be ordered after the fact.

Each dataset has its lattice energies shifted by the mean lattice energy towards zero. By doing this they each maintain their distribution but now overlap around the origin. The model is trained and validated based on this shifted data. The MAE of the predictions on the test set after they have been shifted back is $7.24$ kJ/mol and the MAPE is $3.89\%$.

While the MAE and MAPE are higher than in previous experiments, the improvement over AMD is more significant. The majority of errors come from underestimating the lattice energy. The datasets tend to grow sparser in these areas where lattice energy is lower as this is where the most stable structures tend to lie. Having a false positive (predicting a higher energy structure to be lower) increases the number of potentially stable structures. False negatives are more impactful as they may result in a structure not being considered entirely due to its seemingly low stability.

The histogram in Figure \ref{fig:hist_gt} shows the lattice energies (in kJ/mol) of the three crystals within the dataset. Figure \ref{fig:exp2_gt_vs_pred} shows the comparison of the predictions of the model against the true property values in the dataset, referred to as ground truth values. 

\begin{figure*}[h!]
    \begin{subfigure}[b]{.48\columnwidth}
        \includegraphics[width=\textwidth]{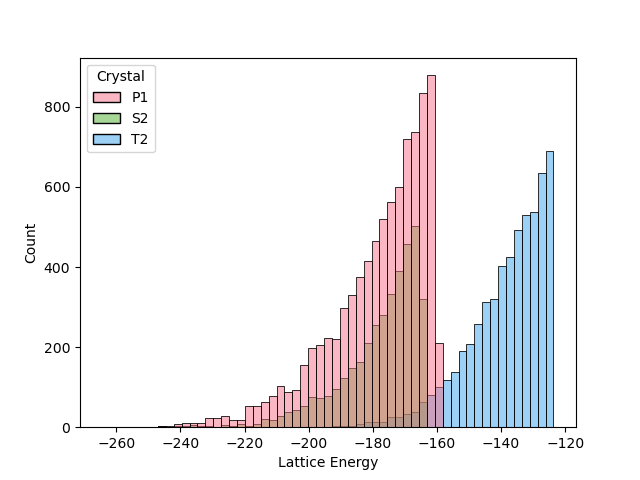}
        \caption{Distribution of ground truth lattice energies.}
        \label{fig:hist_gt}
    \end{subfigure}
    \hfill
    \begin{subfigure}[b]{.48\columnwidth}
        \includegraphics[width=\textwidth]{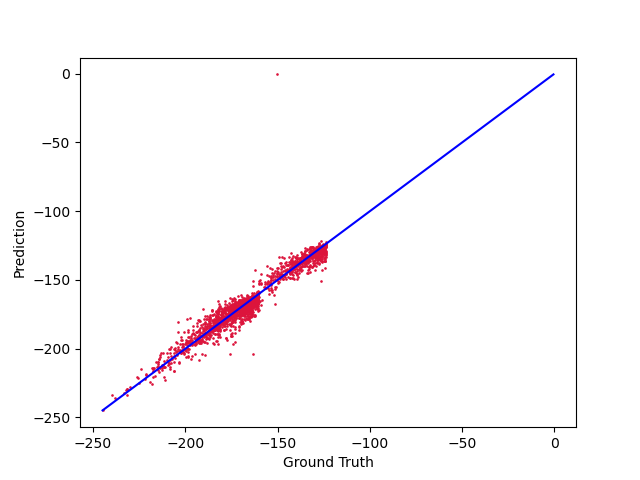}
        \caption{Ground truth lattice energy vs. predictions}
        \label{fig:exp2_gt_vs_pred}
    \end{subfigure}
    \caption{(a) The distribution of the lattice energies of the T2, S2 and P1 crystals. (b) The predictions of T2, S2, and P1 compared to the ground truth lattice energies in kJ/mol. }
\end{figure*}

Predictions that have lower errors will have their point placed closer to the line colored in blue. The bulk of the points share their error both below and above the true lattice energy as we would expect in a model without bias. There are a few outliers, in particular, a single crystal from the T2 dataset has a predicted lattice energy of just $-0.41$. Prevention of such errors can be mitigated by using a different loss function than MAE. In particular, mean-squared error (MSE) and Huber loss can hedge against outlying errors by increasing their contribution to the loss function. We choose to not present the results using these loss functions as MAE still provides better results for MAE and MAPE.

\begin{figure*}[h]
    \begin{subfigure}[b]{.48\columnwidth}
        \includegraphics[width=\textwidth]{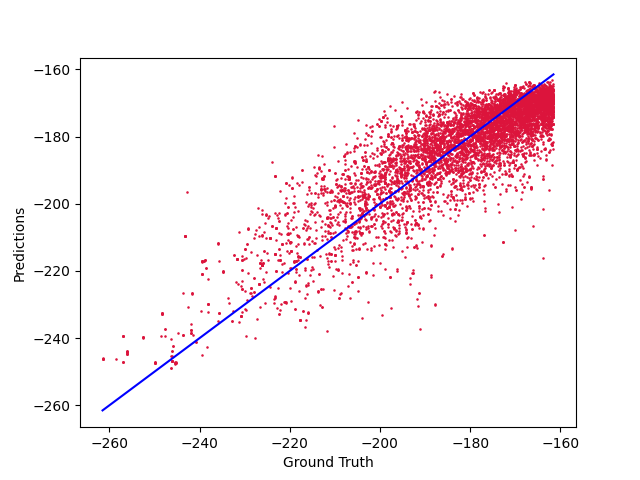}
        \caption{Ground Truth vs. Predictions re-scaled according to mean.}
        \label{fig:exp3_1_scatter}
    \end{subfigure}
    \hfill
    \begin{subfigure}[b]{.48\columnwidth}
        \includegraphics[width=\textwidth]{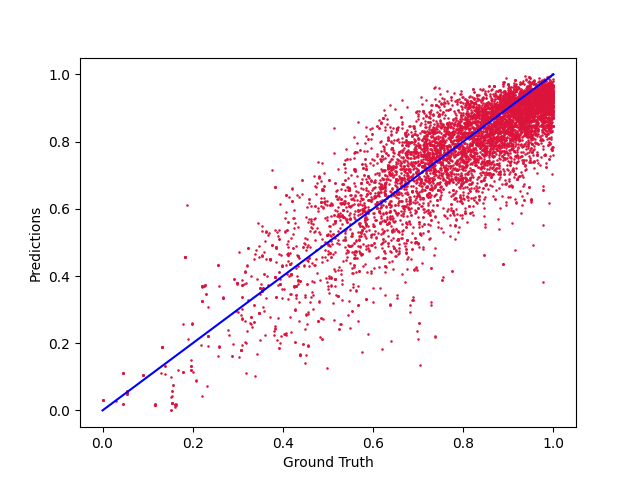}
        \caption{Ground Truth vs. Predictions normalized between zero and one.}
        \label{fig:exp3_2_scatter}
    \end{subfigure}

    \caption{(a) Comparison of predictions vs. true (ground truth) values of the test set after the predictions are scaled back by the mean of the lattice energies. (b) The comparison of predictions vs. ground truth after both have been normalized between zero and one. }
    \label{fig:four figures}
\end{figure*}

In Figure \ref{fig:exp3_1_scatter} we plot the original predictions on P2M after they have been re-scaled back away from the mean lattice energy. In Figure \ref{fig:exp3_2_scatter}, we re-scale the prediction and the true property values found in the dataset (henceforth, referred to as ground truth values) to between zero and one. By doing this, we can see the ordering of the predictions compared to the true lattice energies. The scatter plot in Figure \ref{fig:exp3_2_scatter} allows us to compare lattice energies relative to other predictions. 

If the error rate produced by the final experiment of section \ref{sec:le_results} is inadequate, we can supplement the training with predictions from classical methods. For a novel crystal, ab initio calculations can be used to generate lattice energies for a small subset of structures. These samples can be integrated into the training set to improve the overall results. In order to make this practical, we can only produce lattice energies for a limited portion of structures; we limit this to $10\%$ (or under) of the total structures.

\begin{table}[h]
\caption{\label{tab:exp3_2}%
Effect of including portions of the P2M data into the training set on prediction accuracy. }
\centering
\begin{tabular}{|c|p{3cm}|p{3cm}|c|}
\hline
Samples&
Percent of P2M Data&
Test MAE (kj/mol) $\downarrow$&
Test MAPE $\downarrow$ \\
\hline
 0 & \hfil 0\% &\hfil 7.24 &\hfil 3.89\%\\
367 &\hfil 5\% &\hfil 5.82 & \hfil 3.17\%\\
735 &\hfil 10\% &\hfil 5.50 &\hfil 3.02\%\\
\hline
\end{tabular}
\end{table}

Table \ref{tab:exp3_2} displays the result of adding this supplemental data to the training set. As expected, the addition of data decreases MAE and MAPE. Even with as few as $367$ samples (or $5\%$ of the total dataset), the reduction in error is significant. This process experiences diminishing returns as the amount of the original dataset is used in the training set. Though the MAE and MAPE continue to decrease, it is questionable whether or not spending time generating the instances using classical methods is worth the additional performance gains.

\section{Additional Experiments}\label{sec:addition_experiments}

\subsection{Identifying Crystals with Outlying Property Values}\label{sec:mp_classification}

The properties that have been targeted with the PST are all regression tasks. These can also be reframed as a classification task wherein the model attempts to identify if a given material has a "high" (or alternatively, a "low") property value. Due to the ambiguity of this term, a threshold at which a property qualifies as having a "high" value needs to be established. In the following experiments, a value exceeding the $90^{th}$ percentile of the test set is considered "high".

MAE is no longer a suitable metric for measuring the model's performance. Instead, four other metrics are used: accuracy, recall, and precision. We define the term \textit{true positive} (TP) as the correct classification of materials with a "high" property value. \textit{False positive} (FP) is the incorrect classification of a sample with a "high" property value. \textit{True negative} (TN) refers to the correct classification of a crystal that does not have a "high" property value. Finally, \textit{False Negative} refers to the incorrect classification of material that does not have a "high" property value. Accuracy is determined by the expression $(TP + TN) / (TP + TN + FP + FN)$. Precision is found by evaluating the expression $TP / (TP + FP)$ and recall is found using $TP / (TP + FN)$. The results of this classification task for all properties provided by MatBench are listed in Table \ref{tab:matbench_classification_90}. The scores listed are the average across all five folds of each dataset.

\begin{table}[h]
    \centering
        \caption{Results of the classification of materials with property values in the $90^{th}$ percentile on the MatBench datasets. Threshold refers to the value at which a crystal is determined to be a positive sample if its own property value is higher. The abbreviations in the Table are as follows: (TP) True Positive, (FP) False Positive, (FN) False Negative, (TN) True Negative, and (Acc.) Accuracy. }
    \begin{tabular}{|c|c|c|c|c|c|c|c|c|}
    \hline
        Property & Threshold & TP & FP & FN & TN & Acc. (\%) & Recall (\%) & Precision (\%) \\
        \hline
        Formation Energy & -0.015 $eV/atom$ & 12,309 & 966 & 633 & 118,844 & 98.8 & 95.1 & 92.7 \\
        Band Gap Energy & 3.69 $eV$ & 9,489 & 1,126 & 1,124 & 94,374 & 97.9 & 89.4 & 89.4 \\
        Shear Modulus & 1.99 $log_{10}(GPa)$ & 877 & 204 & 179 & 9,772 & 96.5 & 83.1 & 81.1  \\
        Bulk Modulus & 2.29 $log_{10}(GPa)$ & 955 & 132 & 135 & 9,765 & 97.6 & 87.6 & 87.9  \\
        Refractive Index & 3.392 & 386 & 94 & 78 & 4,206 & 96.4 & 83.2 & 80.4 \\
        Phonon Peak & 1,128 $1/cm$ & 123 & 7 & 8 & 1,127 & 98.8 & 93.4 & 94.6  \\
        Exfoliation Energy & 204.4 $meV/atom$ & 45 & 20 & 18 & 553 & 94.0 & 71.4 & 69.2 \\
        Perovskites FE & 2.436 $eV/cell$ & 1,820 & 50 & 78 & 16,890 & 99.3 & 95.9 & 97.3 \\
    \hline
    \end{tabular}
    \label{tab:matbench_classification_90}
\end{table}

The accuracy for all properties is relatively high, with a lower bound of $94\%$ occurring for the property with the fewest number of samples: exfoliation energy. The trend is similar for recall and precision, with properties containing a smaller sample size having lower scores. Across all properties, the relationship between precision and recall remains balanced with no single property containing an exceedingly high rate of false negatives or false positives. 

\begin{table}[h]
    \centering
        \caption{Results of the classification of materials with property values in the $95^{th}$ percentile on the MatBench datasets. Threshold refers to the value at which a crystal is determined to be a positive sample if its own property value is higher. The abbreviations in the Table are as follows: (TP) True Positive, (FP) False Positive, (FN) False Negative, (TN) True Negative, and (Acc.) Accuracy. }
    \begin{tabular}{|c|c|c|c|c|c|c|c|c|}
    \hline
        Property & Threshold & TP & FP & FN & TN & Acc. (\%) & Recall (\%) & Precision (\%) \\
        \hline
        Formation Energy & 0.178 $eV/atom$ & 6,015 & 625 & 380 & 125,732 & 99.2 & 94.1 & 90.6 \\
        Band Gap Energy & 4.55 $eV$ & 4,662 & 648 & 604 & 100,199 & 98.9 & 88.6 & 87.8 \\
        Shear Modulus & 2.08 $log_{10}(GPa)$ & 408 & 128 & 98 & 10,353 & 98.0 & 81.0 & 76.1  \\
        Bulk Modulus & 2.35 $log_{10}(GPa)$ & 465 & 73 & 83 & 10,366 & 98.6 & 84.9 & 86.4  \\
        Refractive Index & 4.353 & 168 & 72 & 65 & 4,459 & 97.1 & 72.1 & 70.0 \\
        Phonon Peak & 1,428 $1/cm$ & 62 & 3 & 8 & 1,192 & 99.1 & 88.6 & 95.4  \\
        Exfoliation Energy & 318.0 $meV/atom$ & 17 & 18 & 9 & 592 & 95.8 & 65.4 & 48.6 \\
        Perovskites FE & 2.891 $eV/cell$ & 887 & 40 & 38 & 17,963 & 99.6 & 95.9 & 95.7 \\
    \hline
    \end{tabular}
    \label{tab:matbench_classification_95}
\end{table}

Increasing the threshold at which crystals are classified as positive decreases both recall and precision, but accuracies increase. This can be seen in Table \ref{tab:matbench_classification_95} where the threshold is increased to the $95^{th}$ percentile. At this level, most properties still perform well with only small reductions in recall and precision. Exfoliation energy is an exception to this. This is the byproduct of having a comparatively small dataset.

If materials with property values above the $99^{th}$ percentile are considered positive samples the precision and recall of refractive index drops to $24\%$ and $14\%$ percent respectively. Notably, at this threshold, there are only $7$ such crystals which qualify as positive. Datasets with a large number of crystals do not experience as drastic a decrease in efficacy. Band Gap energy, for example, experiences a reduction to $87.1\%$ and $80.4\%$ for precision and recall, respectively. 

\subsection{Effect of $k$-nearest neighbors}

PDD encoding can be said to be parameterized by two values, the collapse tolerance and the number of $k$-nearest neighbors. The integer $k$ determines the dimensionality of initial PDD encoding embedding. As $k$ increases, it retains all information from the previous (smaller) values of $k$. The initial nearest neighbor distances are the most important and the embedding has diminishing returns after this. For any value of $k > 1$, the PDD is invariant. Thus, if the PDD is different, the crystals are guaranteed to be structurally different. In order for the PDD to be distinct such that if any two crystals are different, their PDDs are different, we need the property of generic completeness. This property is given provided the lattice $L$ and sufficiently large $k$. An upper bound on this $k$ is when all distances in the last column of the PDD are larger than twice the covering radius of the lattice $L$ of the periodic set. We would expect the performance of the model to increase up until this point. 

The upper bound on $k$ can be exceedingly large so we implement a heuristic to find a lower bound that is more computationally efficient. As $k$ increases, the number of rows collapsed in the PDD will either stay the same or decrease. The lower bound on $k$ can be considered an integer large enough that the groups established at the upper bound of $k$ are the same as this lower bound. Each crystal could have a different $k$ for which this requirement is met. Our encoding method prohibits the use of a dynamic $k$ value, therefore we need a consistent value for $k$ that can be applied to all crystals in the dataset. The results in Table \ref{tab:mpresults1} use $k=15$. At this value of $k$ the previously mentioned lower bound is satisfied for $99.1\%$ within the formation energy dataset. Increases in $k$ past this point cause marginal improvements to this coverage that were deemed insufficient when the increased computational cost is considered.

\begin{table}[h]
    \centering
  \begin{tabular}{|p{4cm}|p{2.1cm}|p{1.9cm}|p{1.7cm}|p{1.6cm}|}
    \hline
    \multirow{2}{*}{\hfil Property (units)} &
      \multicolumn{4}{c|}{MAE by $k$-Nearest Neighbors PDD Encoding $\downarrow$ } \\
      \cline{2-5}
    & \hfil $5$ & \hfil $10$ & \hfil $15$ & \hfil $20$    \\
    \hline
    Band Gap \hspace{0.1cm} \small{$eV$} & \hfil 0.261 & \hfil 0.232 & \hfil \textbf{0.212} & \hfil \underline{0.214}  \\ 
    Formation \hspace{0.1cm} \small{$eV/atom$} & \hfil 0.039 & \hfil 0.034 & \hfil \underline{0.032} & \hfil \textbf{0.032}  \\ 
    Shear Modulus \hspace{0.1cm} \small{$log_{10}(GPa)$} & \hfil 0.087 & \hfil 0.077 & \hfil \underline{0.075} &  \hfil \textbf{0.075} \\ 
    Bulk Modulus \hspace{0.1cm} \small{$log_{10}(GPa)$} & \hfil 0.064 & \hfil 0.059 & \hfil \underline{0.055} &  \hfil \textbf{0.056} \\ 
    Refractive Index  & \hfil 0.318 & \hfil 0.291 & \hfil \underline{0.292} & \hfil \textbf{0.283} \\ 
    Phonon Peak \hspace{0.1cm} \small{$1 / cm$} & \hfil \textbf{26.38} & \hfil 27.89 & \hfil \underline{27.74} &  \hfil 29.21 \\ 
    Exfoliation \hspace{0.1cm} \small{$meV/atom$} & \hfil 38.77 & \hfil 33.63 & \hfil \textbf{31.55} & \hfil \underline{31.70}  \\ 
    Perovskites FE \hspace{0.1cm} \small{$eV/cell$} & \hfil 0.031 & \hfil 0.031 & \hfil \underline{0.030} & \hfil \textbf{0.030} \\ 
    \hline
  \end{tabular}
    \caption{ Prediction MAE on the Materials Project crystals for various $k$-nearest neighbors PDD Encoding at a collapse tolerance of exactly zero. Errors in bold indicate the value of $k$ with the best performance and underlined errors indicate the second-best performance (lower is better $\downarrow$). }
    \label{tab:mpbyk}
\end{table}

The error rates listed in Table \ref{tab:mpbyk} vary the value of $k$ and report the resulting mean MAE across the five folds. As expected, the lower values of $k$ generally result in higher MAE. Increasing $k$ eventually causes the error rates to stop decreasing. This is also in line with what would be expected as the PDD has enough information to distinguish itself and additional distances are unnecessary. This is not the case only with phonon peak, however, the differences in MAE are relatively small when the deviation of the errors across the folds is considered.

\subsection{Effect of collapse tolerance}\label{sec:mp_results_by_tol}

The collapse tolerance dictates which rows of the PDD will be collapsed. As this parameter increases, the size of the grouped rows will increase. Once rows are grouped, their distances are averaged in the row which represents the group. The change in this averaged row is proportional to the size of the collapse tolerance. In $PDD(S;k)$, as the collapse tolerance approaches infinity, the PDD will decrease in the number of rows until it consists of just a single row with a weight equal to one. In $PDD(S;k)$, the same increase in collapse tolerance will result in a number of rows within the PDD equal to the number of unique elements within the crystal. In both cases, a collapse tolerance that is large enough will result in information loss, eventually increasing errors in predictions.  


\begin{table}[h]
    \centering
  \begin{tabular}{|p{4cm}|p{2.1cm}|p{1.9cm}|p{1.7cm}|p{1.6cm}|}
    \hline
    \multirow{2}{*}{\hfil Property (units)} &
      \multicolumn{4}{c|}{MAE by Collapse Tolerance in PDD Encoding $\downarrow$ } \\
      \cline{2-5}
    & \hfil $1.0$ & \hfil $10^{-2}$ & \hfil $10^{-4}$ & \hfil $0.0$    \\
    \hline
    Band Gap \hspace{0.1cm} \small{$eV$}               & \hfil 0.241 & \hfil 0.222 & \hfil \underline{0.210} & \hfil \textbf{0.212}  \\ 
    Formation \hspace{0.1cm} \small{$eV/atom$}         & \hfil 0.037 & \hfil 0.033 & \hfil \underline{0.032} & \hfil \textbf{0.032}  \\ 
    Shear Modulus \hspace{0.1cm} \small{$log_{10}(GPa)$} & \hfil \textbf{0.074} & \hfil 0.074 & \hfil \underline{0.074} &  \hfil 0.075 \\ 
    Bulk Modulus \hspace{0.1cm} \small{$log_{10}(GPa)$}  & \hfil 0.056 & \hfil 0.056 & \hfil \textbf{0.056} &  \hfil \underline{0.055} \\ 
    Refractive Index                                     & \hfil \textbf{0.283} & \hfil 0.292 & \hfil \underline{0.290} & \hfil 0.292 \\ 
    Phonon Peak \hspace{0.1cm} \small{$1 / cm$}          & \hfil \underline{28.41} & \hfil 28.50 & \hfil 29.40 &  \hfil \textbf{27.74} \\ 
    Exfoliation \hspace{0.1cm} \small{$meV/atom$}        & \hfil 32.19 & \hfil \textbf{31.13} & \hfil \underline{31.15} & \hfil 31.55 \\
    Perovskites FE \hspace{0.1cm} \small{$eV/cell$}       & \hfil 0.030 & \hfil 0.030 & \hfil 0.030 & \hfil 0.030 \\ 
    \hline
  \end{tabular}
    \caption{ Prediction MAE on the Materials Project crystals using PDD Encoding at various collapse tolerances with $k=15$. Errors in bold indicate the collapse tolerance with the best performance and underlined errors indicate the second-best performance (lower is better $\downarrow$). }\label{tab:mpbytol}
    
\end{table}

The collapse tolerance is varied and then applied to the Materials Project crystals. The results of this experiment are listed in Table \ref{tab:mpbytol}.

By increasing the collapse tolerance and reducing the number of rows within the PDD, we can increase the speed of computations.  Thus, it is important to choose a tolerance that is maximal, while not sacrificing accuracy. The impact of the collapse tolerance on the size of representation is listed in Table \ref{tab:mpsizebytol}. These values are calculated by dividing the number of rows in the PDD by the number of atoms in the unit cell. The number of atoms in the unit cell is used to determine the number of vertices in the crystal graph \cite{CGCNN}. In this way, the size of our representation can be compared to that of popular graph-based models. Data for crystals typically comes in the form of Crystallographic Information Files (CIF). These files also indicate the amount of potential measurement error for the atomic positions. This is used as a guide and a collapse tolerance of $10^{-4}$ is used in the experiments for the results in Table \ref{tab:mpresults1}. Sometimes, however, a higher collapse tolerance can act as a regularization technique that is useful on smaller datasets. This effect is seen in the results for refractive index and exfoliation energy, but there is still a balance to be struck. In larger datasets, the performance regression is more noticeable. A collapse tolerance of one is far higher than what would be necessary to cover measurement error in atomic coordinates and would not be advised, even with the potential efficiency gains. Overall, the collapse tolerance does not have a very large impact due to the prevention of rows corresponding to different atoms from being collapsed in the PDD. 

\begin{table}[h]
    \centering

  \begin{tabular}{|p{2.3cm}|c|c|c|c|c|c|c|c|c|}
    \hline
    \multirow{2}{*}{\hfil Property} & \multirow{2}{*}{Mean $|M|$} & \multicolumn{4}{c|}{ Size of Input } & \multicolumn{4}{c|}{ Percentage of $|M|$ }   \\
      \cline{3-10}
    & & 0.0 & $10^{-4}$ & $10^{-2}$ & 1.0 & 0.0 & $10^{-4}$ & $10^{-2}$ & 1.0 \\
    \hline
    Phonon Peak & 7.5 & 7.3 & 3.6 & 3.5 & 3.4 & 96.8\% & 55.7\% & 54.5\% & 53.8\%    \\
    Ref. Index & 16.9 & 16.5 & 6.7 & 6.2 & 5.8 & 97.7\% & 49.1\% & 46.3\% & 44.1\%   \\
    Bulk Modulus & 8.6 & 8.3 & 4.1 & 3.9 & 3.7 & 96.7\% & 62.1\% & 60.4\% & 59.4\%   \\
    Shear Modulus & 8.6 & 8.3 & 4.1 & 3.9 & 3.7 & 96.7\% & 62.1\% & 60.4\% & 59.4\%    \\
    Band Gap & 30.0 & 29.2 & 13.1 & 12.0 & 10.1 & 97.0\% & 52.0\% & 48.5\% & 44.2\%    \\
    Formation & 29.1 & 28.4 & 12.7 & 11.6 & 9.9 & 96.9\% & 53.1\% & 49.7\% & 45.7\%    \\
    Exfoliation & 7.2 & 7.1 & 3.6 & 3.3 & 3.2 & 98.8\% & 55.9\% & 51.9\% & 51.5\%   \\
    Perovskites & 5.0 & 4.9 & 4.6 & 4.6 & 4.6 & 99.0\% & 94.8\% & 94.8\% & 94.8\%\\ 
    \hline
  \end{tabular}
 
\caption{\label{tab:mpsizebytol}%
 Size of the input representation for each dataset in the Materials Project at various collapse tolerances at $k=15$ compared to the number of atoms in the unit cell $|M|$. The size of the input refers to the cardinality of the input set determined by the number of rows in the PDD. The percentage of $|M|$ is the input set's cardinality divided by the number of atoms in the unit cell, expressed as a percentage.  }
\end{table}

\section{Implementation Details}\label{sec:implemention_details}

The Periodic Set Transformer is implemented using \textit{PyTorch} \cite{paszke2019pytorch}. There is also a version implemented using \textit{Tensorflow} \cite{abadi2016tensorflow}, however, we have found this version to significantly underperform when compared to the PyTorch version. We believe this to be due to how the output of individual attention head output is handled in their respective implementations of Multi-head Attention. 

Data pre-processing is fairly minimal. Each crystal comes in the form of a \texttt{Pymatgen} structure. The structure is converted into a \texttt{PeriodicSet} object. This functionality is provided by the \textit{AMD} package \cite{widdowson2022average}. The PDD of each of the \texttt{PeriodicSet} objects is then calculated with the desired collapse tolerance and $k$ value. Each column is then normalized to between zero and one. This is not necessary for achieving the desired accuracy but it does significantly improve the speed of training by requiring fewer epochs. 

With respect to the results in Table \ref{tab:mpresults1}, training is done on each property with the same hyper-parameters. The hyper-parameters that govern PDD encoding remain the same for all properties: a tolerance of $10^{-4}$ and $k = 15$. Training options including weight decay, epochs, and learning schedule are kept the same as well, except for batch size. Batch size is either 32 or 64 depending on the number of samples: 32 if the number of crystals in the dataset is less than 5000 and 64 if greater.

\end{document}